\theoremstyle{plain}
\newtheorem{theorem}{Theorem}[section]
\newtheorem{corollary}[theorem]{Corollary}
\theoremstyle{definition}
\newtheorem{definition}[theorem]{Definition}
\theoremstyle{remark}
\newtheorem{remark}[theorem]{Remark}
\definecolor{lightgreen}{RGB}{144,238,144}
\definecolor{lightred}{RGB}{255,192,203}
\newcommand{\gbg}[1]{\colorbox{lightgreen}{\textcolor{black}{#1}}}
\newcommand{\rbg}[1]{\colorbox{lightred}{\textcolor{black}{#1}}}
\definecolor{light-gray}{gray}{0.95} 
\bfseries\color{black},
\def\eqref#1{equation~\ref{#1}}
\def\1{\bm{1}}
\def\vw{{\bm{w}}}
\def\vx{{\bm{x}}}
\def\vz{{\bm{z}}}
\def\mW{{\bm{W}}}
\DeclareMathAlphabet{\mathsfit}{\encodingdefault}{\sfdefault}{m}{sl}
\SetMathAlphabet{\mathsfit}{bold}{\encodingdefault}{\sfdefault}{bx}{n}
\def\gD{{\mathcal{D}}}
\def\gF{{\mathcal{F}}}
\def\gH{{\mathcal{H}}}
\def\gP{{\mathcal{P}}}
\def\gX{{\mathcal{X}}}
\def\gZ{{\mathcal{Z}}}
\def\sD{{\mathbb{D}}}
\def\sN{{\mathbb{N}}}
\newcommand{\R}{\mathbb{R}}
\DeclareMathOperator*{\argmax}{arg\,max}
\newcommand{\reals}{\R}
\newcommand{\naturals}{\sN}
\newcommand{\x}{\vx}
\newcommand{\z}{\vz}
\newcommand{\bw}{\vw}
\newcommand{\W}{\mW}
\newcommand{\cd}{\gD}
\newcommand{\ch}{\gH}
\newcommand{\cf}{\gF}
\newcommand{\cx}{\gX}
\newcommand{\cz}{\gZ}
\newcommand{\cp}{\gP}
\newcommand{\bbD}{\sD}
\newcommand{\inner}[1]{\left\langle #1\right\rangle}
\newcommand{\ind}{\mathbf{1}}
\newcommand{\Lin}{\mathrm{Lin}}
\newcommand{\poly}{\mathrm{poly}}
\icmltitlerunning{Auto-Regressive Next-Token Predictors are Universal Learners}
\begin{document}

\twocolumn[
\icmltitle{Auto-Regressive Next-Token Predictors are Universal Learners}




\begin{icmlauthorlist}
\icmlauthor{Eran Malach}{kempner}
\end{icmlauthorlist}

\icmlaffiliation{kempner}{Harvard University, Kempner Institute for the Study of Natural and Artificial Intelligence}

\icmlcorrespondingauthor{Eran Malach}{emalach@fas.harvard.edu}

\icmlkeywords{Machine Learning, ICML}

\vskip 0.3in
]



\printAffiliationsAndNotice{}  

\begin{abstract}
Large language models display remarkable capabilities in logical and mathematical reasoning, allowing them to solve complex tasks. Interestingly, these abilities emerge in networks trained on the simple task of next-token prediction. In this work, we present a theoretical framework for studying auto-regressive next-token predictors. We demonstrate that even simple models such as linear next-token predictors, trained on Chain-of-Thought (CoT) data, can approximate any function efficiently computed by a Turing machine. We introduce a new complexity measure---length complexity---which measures the number of intermediate tokens in a CoT sequence required to approximate some target function, and analyze the interplay between length complexity and other notions of complexity. Finally, we show experimentally that simple next-token predictors, such as linear networks and shallow Multi-Layer Perceptrons (MLPs), display non-trivial performance on text generation and arithmetic tasks. Our results demonstrate that the power of today's LLMs can be attributed, to a great extent, to the auto-regressive next-token training scheme, and not necessarily to a particular choice of architecture.\footnote{Code for the experiments is available at: \url{https://github.com/emalach/LinearLM}}
\end{abstract}

\section{Introduction}

Large language models have achieved tremendous progress in various NLP tasks, such as machine translation, logical reasoning, coding and natural language understanding. These models, like GPT-3, GPT-4 and LaMDA \citep{brown2020language, OpenAI2023GPT4TR, thoppilan2022lamda}, are trained on massive amounts of text data and learn to generate coherent and contextually relevant responses to input prompts. Amazingly, such language models are mostly trained with a single objective: predicting the next token. While this objective seems extremely simplistic, auto-regressive next-token predictors trained on rich enough data are able to solve strikingly complex tasks \citep{bubeck2023sparks}. This raises the question of whether such next-token predictors are merely ``glorified'' autocomplete models, which happened to memorize the entire internet, or are they truly performing novel logical reasoning. To this end, it has been shown that the ability of language models to compute complex functions can be greatly enhanced by using Chain-of-Thought (CoT) and scratchpad techniques \citep{wei2022chain, kojima2022large, lightman2023let,nye2021show}, allowing the models to perform unrestricted intermediate computations before arriving at a final answer.

In this work, we introduce a theoretical framework for studying auto-regressive next-token predictors. We demonstrate that much of the power of today's language models in logical reasoning can be attributed to the nature of the auto-regressive learning, and not to a particular choice of architecture. We show theoretically that very simple models trained to only predict the next token in an auto-regressive fashion can be used to solve extremely complex tasks when utilizing CoT techniques. In particular, we show that even linear predictors---models where the next-token probability is a linear function of the input sequence---are already powerful enough to compute \emph{any Turing computable function}. The main theoretical result in the paper is captured in the following informal statement:
\begin{theorem}[informal]
\label{thm:informal}
    For any function $f$  that can be \emph{efficiently} computed using a Turing machine, there exists a dataset $D$ such that training a (linear) next-token predictor on $D$ results in a predictor that approximates $f$.
\end{theorem}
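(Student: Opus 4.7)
The plan is to reduce simulation of a Turing machine $M$ computing $f$ in polynomial time $T(n)$ to a sequence of local next-token predictions, and then to show that these local predictions are realized by a linear model over a suitably chosen feature embedding. I would proceed in three steps: (i) construct a canonical CoT dataset $D$ in which each example records the full trajectory of $M$ on an input $x$; (ii) observe that the next-token function along these trajectories is a bounded-context lookup independent of both $x$ and the position in the sequence; and (iii) realize this lookup via a linear predictor on a bounded-window feature map, then argue that ERM on enough traces recovers it.

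For step (i), fix $M$ with transition function $\delta$ and, for each $x$ in the training set, write the example $x \# C_1 \# C_2 \# \cdots \# C_T \# f(x)$, where $C_t$ encodes the configuration (state, head position, tape contents) of $M$ at step $t$, padded to a polynomial length. Each trace has length $\poly(|x|)$, so $D$ is tractable to generate. For step (ii), the key structural fact is that each cell of $C_{t+1}$ is determined by a constant-size window of $C_t$ around the same position (the cell itself, its two neighbors, and a state/head marker). Hence the rule ``given history, output the next token'' reduces, everywhere along the CoT, to a single fixed finite lookup table $\delta'$ on a constant number of symbols.

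For step (iii), I would choose an embedding that exposes this locality to a linear model. The cleanest route is to slightly modify the CoT encoding so that every predicted symbol is immediately preceded by the $O(1)$ tokens it depends on --- for example, by interleaving the new configuration symbol-by-symbol with a copy of the relevant window of the old one, so that at the position where the model must emit the next cell, the three-cell context of $C_t$ together with the current state is sitting in the last few positions. Then $\delta'$ becomes a lookup on the last constantly-many tokens, which a linear classifier over indicator features of bounded-length $n$-grams represents exactly; a softmax on top gives the next-token distribution. Because the resulting hypothesis class has polynomial VC dimension and $D$ is realized exactly by some weight vector, standard generalization bounds finish the argument and yield approximation of $f$ with arbitrarily high probability.

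The main obstacle is step (iii): bridging the mismatch between ``local'' (the TM transition only needs a three-cell window) and ``linear over a raw sequence embedding'' (a plain linear model on one-hot tokens cannot form interactions between cells at distant positions). A naive last-$n$-tokens embedding fails because, without the encoding trick above, the relevant window of $C_t$ lies $\Theta(T(|x|))$ tokens back, and a pure bag-of-words embedding destroys the positional alignment between $C_t$ and the current output position in $C_{t+1}$. Pushing the cleverness into the data-generation scheme, rather than into the features or architecture, is what lets the theorem hold for genuinely simple linear next-token predictors.
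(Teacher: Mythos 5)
Your overall scaffolding (generate a CoT dataset by simulating the Turing machine, argue realizability of the next-token map by the hypothesis class, then invoke PAC/ERM-style generalization and an induction to transfer from teacher forcing to autoregressive rollout) matches the paper's plan, which is formalized via Theorems \ref{thm:ar_learnability}, \ref{thm:app_learning} and \ref{thm:boolean_circuit}. But your step (iii) has a genuine gap with respect to the class the theorem is actually about. The paper's linear AR predictor is $h_\W(\x,\z)=\arg\max_{D\in\bbD}\inner{W_D,\psi([\x,\z])}$ with $\psi$ a \emph{per-token} embedding; the only non-linearity is the argmax applied between autoregressive steps. An arbitrary constant-size transition lookup $\delta'$ is in general \emph{not} a linear (threshold) function of the one-hot encodings of the window tokens---the XOR obstruction of Minsky--Papert, which the paper itself cites, already kills a two-cell lookup containing a parity pattern. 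Your repair, ``indicator features of bounded-length $n$-grams,'' silently replaces $\psi$ by a feature map that computes joint indicators over several positions, i.e.\ it builds arbitrary Boolean interactions into the features. Over that feature map the predictor is linear, but the resulting class is strictly richer than $\ch^\Lin$, and the theorem's point---that the computational power comes from the intermediate tokens and the argmax, not from non-linear features or architecture---is lost. So as written, the proposal proves a statement about a different hypothesis class.

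The paper avoids this by choosing the intermediate tokens differently: it compiles the time-$T(n)$ Turing machine into a linear threshold circuit of size $\poly(T(n))$ and lets each CoT token be one gate value, emitted in topological order. Each next token is then a single linear threshold function of the input bits and previously emitted tokens, which is exactly representable as an argmax of a linear function over per-token embeddings when $\bbD=\{0,1\}$ (Theorem \ref{thm:boolean_circuit}); the dataset in the corollary consists of these gate-value traces. You could rescue your configuration-trace construction within the paper's class by decomposing each local lookup $\delta'$ into a constant-size threshold circuit and emitting those auxiliary gate values as extra tokens, but that is essentially the paper's construction. One further minor point: ``standard generalization bounds finish the argument'' glosses over the mismatch between training on ground-truth prefixes and autoregressive inference; the paper closes this with a union bound over the $T$ positions (Theorem \ref{thm:ar_learnability}) plus the induction showing that correct next-token prediction on ground-truth prefixes forces the rollout to coincide with the target trajectory (Theorem \ref{thm:app_learning}). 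Your argument needs the same two ingredients stated explicitly.
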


That is, any computer program or intelligent agent that can be simulated by a computer, can be learned, given the right dataset, by a simple next-token predictor.

To understand the power of auto-regressive learning, observe that a result equivalent to Theorem \ref{thm:informal} is not possible in the classical supervised learning setting, where the learner is given access only to the input sequence and the target label. It is well-known that no learning algorithm can efficiently learn the class of all (efficient) Turing computable functions \citep{valiant1984theory}, given only the input and the output of the function (without access to intermediate supervision). In fact, in classical supervised learning, there are only a few function classes that are known to be \emph{efficiently learnable}---function classes for which there exists a learning algorithm that can efficiently recover the target function given a labeled dataset. Learnable function classes are known to have fundamental limitations to their computational capacity. For example, the class of linear predictors is efficiently learnable in many settings, e.g. using the Perceptron algorithm \citep{rosenblatt1958perceptron}. However, a famous result in \cite{minsky2017perceptrons} shows that linear predictors cannot compute simple functions such as the XOR function. Auto-regressive learning, however, presents a striking difference. While linear next-token predictors are still \emph{efficiently learnable} using simple algorithms such as SGD, their computational capacity greatly surpasses the capacity of their \emph{classical} counterparts. Since auto-regressive inference introduces a sampling function\footnote{In our analysis we focus on the zero-temperature/argmax sampling, which acts as an explicit non-linearity. } after each step, it allows linear next-token predictors to compute non-linear functions. As implied by Theorem \ref{thm:informal}, linear next-token predictors can implement practically any target function of interest.

While next-token predictors have the capacity to generate highly proficient learners, this does not come without a cost. One significant expense is the requirement to provide the learning model with potentially long sequences of tokens that detail the internal computations of the target. This requirement can be resource-intensive and often impractical. As such, it prompts the introduction of a new measure of learning complexity, analogous to sample complexity or run-time complexity: the \emph{length complexity}. This type of complexity measures the quantity of intermediate tokens in a CoT necessary for the model to learn a particular concept class. We explore this complexity in the context of the parity learning problem, an extension of the XOR problem that is known to be computationally hard to learn in some settings. We demonstrate how traditional forms of complexity, such as sample or run-time complexity, can be traded off with length complexity when learning parities. Specifically, we show that an \emph{increase} in the complexity of the hypothesis class---and therefore in sample or computational complexity---leads to a \emph{decrease} in length complexity. This opens up a new path for the theoretical investigation of auto-regressive learning, by studying the interplay between these different complexity measures.

To substantiate our theoretical results, we experimentally illustrate the power of auto-regressive learning in enhancing the performance of simple models. We train a linear next-token prediction network on the TinyStories dataset \citep{eldan2023tinystories}, a collection of short stories composed of simple words. We observe that linear models, once trained on this dataset, frequently generate plausible and grammatically sound stories. Next, we demonstrate that a shallow Multi-Layer Perceptron (MLP) with 775M parameters (no attention layers), can learn to correctly multiply two 4-digit numbers, given CoT data. Our MLP achieves comparable results to Goat, a 7B-parameter transformer trained to solve arithmetic tasks \citep{liu2023goat}.

\begin{remark}
    We use the term \emph{chain-of-thought} to refer to situations where a language model, when given some input question, outputs a sequence of intermediate steps before arriving at the final answer. In general, this behavior may arise due to training on data with chain-of-thought demonstrations, or by prompting the model to ``think step by step''. Our theoretical results imply that linear auto-regressive models can compute complex functions using a chain of intermediate calculations, and that they learn to do so when trained on data with such chain-of-thought sequences. The study of chain-of-thought ``prompting'' of language models trained on a large corpus of data is beyond the scope of this paper.
\end{remark}


\section{Related Work}

\paragraph{CoT Reasoning} The proposition of supervising intermediate logical steps as an effective approach for problem-solving is well established, predating the advent of Transformer models. The technique was found to be particularly beneficial in solving arithmetic problems \citep{roy2016solving}. This idea became very popular with the introduction of the Chain-of-Thought (CoT) approach, where models are prompted to elucidate their thought process prior to yielding a final outcome \citep{wei2022chain, kojima2022large, lightman2023let}. Recent developments have further demonstrated the efficacy of the CoT method in the training of smaller student models \citep{li2023symbolic, li2022explanations, magister2022teaching}. Another method that bears similarity to CoT is the ``scratchpad'' technique, which allows models to record intermediate computations that subsequently aid in deriving the final answer \citep{nye2021show}. Such techniques have been shown to enhance performance across a variety of logical reasoning and arithmetic tasks. The research presented in this paper aims to contribute to the theoretical understanding of CoT reasoning in auto-regressive models. Our work illustrates how the employment of CoT can significantly amplify the capabilities of simple models. Furthermore, we introduce a novel complexity measure, the \emph{length complexity}, that allows us to study the influence of the length of the intermediate sequence of tokens within CoT on the difficulty of the learning problem.

\paragraph{Language Models for Arithmetic Tasks} Leveraging large language models to tackle mathematical reasoning and arithmetic tasks has gained significant interest, a trend that is discussed at length in a recent survey \citep{lu2022survey}. While these models have demonstrated a promising capacity for solving an array of mathematical problems, they often encounter difficulties in executing straightforward arithmetic operations, such as the multiplication and addition of large numbers \citep{nogueira2021investigating, qian2022limitations}. Previous studies have suggested that the efficiency of language models in arithmetic tasks can be dramatically enhanced by structuring them to perform calculations using an algorithmic pipeline, facilitating step-by-step execution \citep{muffo2023evaluating}. A notable contribution in this realm is the recent work by \cite{liu2023goat}, where they fine-tuned a moderately sized (7B-parameter) transformer employing the CoT method to perform complex arithmetic operations, including the multiplication of large numbers---a challenge even for advanced models like GPT-4. A very recent work studies the ability of small transformers trained from scratch to solve arithmetic tasks \citep{lee2023teaching}. In our study, we further substantiate this claim by demonstrating that a small MLP, devoid of any attention mechanism, can match the performance of the transformer in \cite{liu2023goat} in 4-digit multiplication, provided that it receives appropriate intermediate supervision. This highlights that the capability of language models for arithmetic and mathematical reasoning is largely attributable to the CoT and next-token prediction techniques, rather than the specific architectural choice.

\paragraph{Beyond Transformers} Although the transformer architecture \citep{vaswani2017attention} currently stands as the leading approach in language modeling, it is noteworthy that a diverse range of other architectures have served this purpose over time. A notable instance is the application of Recurrent Neural Networks (RNNs) \citep{hochreiter1997long}, a model highly popular for language modeling only a few years back, due to its efficient and inherent sequence processing capabilities \citep{mikolov2010recurrent}. Furthermore, convolutions have also been explored for language modeling tasks \citep{dauphin2017language}. A work more related to our own leveraged linear dynamical systems to model text \citep{belanger2015linear}. Recent years have witnessed an emerging interest in substituting the attention layer of transformers, primarily due to its high computational cost, with simpler and more efficient alternatives. In this vein, the work of \cite{katharopoulos2020transformers} introduced the linear transformer, where the attention layer was replaced with a more computationally-friendly linear layer. Concurrently, \cite{zhai2021attention} advanced an Attention-Free Transformer. More recent advancements include the RWKV architecture \citep{peng2023rwkv}, a modern variant of the RNN architecture inspired by transformers, which exhibits competitive performance when trained on large datasets. Some studies have proposed the use of simpler MLP-based architectures as feasible alternatives to transformers \citep{tolstikhin2021mlp, liu2021pay}. Our work contributes to this ongoing discourse by conducting both theoretical and empirical investigations into the potential of very simple models, such as linear models and small MLPs, training them to solve complex tasks by leveraging the power of next-token auto-regressive learning.

\paragraph{Related Theoretical Work} Despite the rapid pace of practical advancements in the realm of language models and transformers, the theoretical underpinning remains comparatively unexplored. Early investigations have established the universality of transformers (i.e., their ability to emulate any Turing machine) given the incorporation of a recurrent module \citep{yun2019transformers, wei2022statistically}. More recently, it has been demonstrated that transformers can simulate universal computers when incorporated into an execution loop \citep{giannou2023looped}. The work of \cite{liu2022transformers} shows that Transformers can simulate Automata, which are equivalent to bounded-memory programs, using surprisingly few layers. Turing universality extends to other language modeling architectures, such as RNNs \cite{siegelmann1992computational}. A study by  \cite{edelman2022inductive} underscores the inductive biases of self-attention, demonstrating that bounded-norm Transformer networks can represent sparse functions with logarithmically scaling sample complexity.
The work of \cite{feng2023towards} theoretically demonstrates the importance of CoT for solving mathematical problems with transformers. 
Of particular relevance to our study is the work of \cite{wies2022sub}, which delves into how sub-task decomposition and the CoT technique can facilitate the learning of computationally challenging problems. Similarly to our study, \cite{wies2022sub} also explores parity learning with intermediate supervision and demonstrates that arbitrary Turing machines can be efficiently learned by language models trained with CoT. Our work extends these findings, introducing a theoretical framework that enables broader examination of auto-regressive learning. We show that even linear predictors can efficiently learn Turing computable functions. In addition, our results offer improved length complexity bounds for learning parities, indicating that parities can be learned using $O(\log n)$ intermediate tokens, a marked reduction from the $O(n)$ intermediate tokens in \cite{wies2022sub}.

\section{Theory}

The key principle in our theoretical results is the differentiation between ``classical'' supervised learning and Auto-Regressive (AR) learning. In supervised learning, there is a clear separation between the input and the label (or target). The learner gets a dataset of inputs with their labels, and needs to find a model that correctly predicts the label of a new input example. While supervised learning tasks can sometimes be easy (e.g., when the label is given by a linear function of the input features), this task becomes very hard, or even impossible, when the function used for generating the labels requires a complex computational process \citep{valiant1984theory}. This hardness stems from the fact that the internal computation is not available to the learner, who only observes the input and the corresponding final output.

In AR learning, on the other hand, the situation is different. AR learners get a sequence of tokens, and treat every token both as an input (for predicting future tokens) and as a label (for sequences of previous tokens). Coupling AR learning with the CoT technique results in a learning paradigm where the internal computations required for reaching the final answer become available to the learner both as inputs and as \emph{labels}. This naturally allows supervision on intermediate steps in the computation/reasoning process, which greatly simplifies the learning task.

In the following sections we detail our theoretical results. In Section \ref{sec:learnability} we formally define the framework of AR Learning and Learnability, in an analogous way to classical PAC Learning. We then show how PAC Learnable hypothesis classes can be used for constructing AR Learnable classes, and discuss the special case of linear classes (which are known to be efficiently PAC Learnable). In Section \ref{sec:approximation} we discuss approximation results, namely understanding what types of function a given AR model can compute. To this end, we consider the function computed by the model to be the function mapping the input tokens to the final token(s), allowing the model to arbitrarily use internal computations in a chain-of-thought manner. Following this, we show that even linear AR models can compute very complex functions, for example emulating arbitrary Turing machines. Finally, in Section \ref{sec:length_complexity} we introduce \emph{length complexity}, which measures how many intermediate tokens are required in order to learn to compute a given function. We show that using more intermediate tokens, i.e. increasing the length complexity, can reduce time/sample complexity, and vice-versa.

\begin{remark}
It is important to note that in AR learning there is a crucial difference between the ``training mode'' and ``inference mode'' of the model. During training we use ``teacher forcing'': training the model to predict the next token given an input \emph{from the ground truth data}. During inference, however, we feed the model with some input and let it generate auto-regressively, predicting the next token given the sequence of tokens that were generated by the trained model itself. In our theoretical analysis we prove results in both the training mode setting (e.g., Theorem \ref{thm:ar_learnability}) and the inference mode (Theorem \ref{thm:boolean_circuit}). These results, while seeming independent, can be viewed as two parts of the same proof, giving guarantees on the inference-time behavior of some predictor trained with teacher forcing.
\end{remark}

\subsection{Learnability Results}
\label{sec:learnability}

Let $\bbD$ be a finite set of tokens, let $\cx = \bbD^n$ be the space of contexts of $n$ tokens, and let $\cz = \bbD^*$ be a space of strings of tokens. For some $t$, we denote $\cz_t = \bbD^t$. An Auto-Regressive (AR) function $h$ is a mapping $\cx \times \cz \to \bbD$ (we assume a deterministic function).
An AR hypothesis class $\ch$ is a set of AR functions. Fix some $T \in \naturals$.\footnote{In Section \ref{sec:length_complexity} we study how the choice of $T$ affects the complexity of the learning problem, but for now we treat $T$ as a fixed parameter of the learning problem.} For some distribution $\cd$ over $\cx \times \cz_T$, we say that $\cd$ is \emph{realizable} by the AR class $\ch$ if there exists a function $h \in \ch$ such, with probability $1$ over $(\x, \z) \sim \cd$, we have $h(\x, \z_{<t}) = z_{t}$ for all $t \le T$ (where $\z_{<t}$ denotes the first $t-1$ coordinates of $\z$). In other words, the pair $(\x, \z)$ is realizable by $h$ if $h$ accurately predicts the next token for all prefixes $\z_{<t}$ of $\z$. We now define Learnability in the AR framework:

\begin{definition}
    We say that $\ch$ is \emph{AR Learnable} if there exists a function $m : (0,1)^2 \to \naturals$ and an algorithm such that for every $\epsilon,\delta \in (0,1)$ and distribution $\cd$ realizable by $\ch$, given a sample of size $m(\epsilon, \delta)$ from $\cd$, returns with probability (w.p.) $\ge 1-\delta$ a function $\hat{h} \in \ch$ s.t.
    $\Pr\left[\exists t \le T ~\mathrm{s.t.}~\hat{h}(\x, \z_{<t})\ne z_{t}\right] \le \epsilon$.
Furthermore, we say that $\ch$ is \underline{efficiently} \emph{AR Learnable} if it is \emph{AR Learnable} with an algorithm running in polynomial time.
\end{definition}

That is, a class $\ch$ is (efficiently) AR Learnable if there exists an (efficient) algorithm that finds, w.h.p., a next-token predictor with low error.

We now show that hypothesis classes that are learnable in the classical sense (i.e., by supervised learning), naturally induce hypothesis classes that are AR Learnable. Let $\ch$ be some AR hypothesis class. We assume that $\ch$ can be decomposed into ``standard'' hypothesis classes in the following sense. Let $\{\ch_t\}_{t=1}^\infty$ be a sequence of classes, where $\ch_t$ is a class of functions $\cx \times \cz_{t-1} \mapsto \bbD$. We assume that $\ch = \ch_1 \times \ch_2 \times \dots$. Namely, we associate every $h \in \ch$ with a sequence $(h_1, h_2, \dots)$, where $h_i \in \ch_i$, s.t. for every $\x \in \cx$ and $\z \in \cz_{t-1}$ we have $h(\x, \z_{<t}) = h_t(\x, \z_{<t})$. While we define $\ch$ on arbitrarily long sequences, when we study learnability we limit ourselves to discussing sequences of length at most $T$. In particular, we can assume $\ch = \ch_1 \times \dots \times \ch_T$. The following result shows that PAC Learnability of the underlying hypothesis classes (as defined e.g. in \cite{shalev2014understanding}) implies AR Learnability of the class $\ch$:

\begin{theorem}
    \label{thm:ar_learnability}
    If $\ch_1, \dots, \ch_T$ are (efficiently) \emph{PAC Learnable} with sample complexity $m(\epsilon, \delta)$, then $\ch = \ch_1 \times \dots \times \ch_T$ is (efficiently) \emph{AR Learnable} with sample complexity $m(\epsilon/T, \delta/T)$. 
\end{theorem}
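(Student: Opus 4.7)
The plan is to reduce AR learnability for the product class to $T$ independent applications of the PAC learner for the component classes $\ch_t$, stitched together by a union bound over the $T$ positions.

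First I would formalize the reduction. Given a sample $S = \{(\x^{(i)}, \z^{(i)})\}_{i=1}^m$ drawn i.i.d.\ from $\cd$, I construct, for each $t \in \{1, \dots, T\}$, a derived sample $S_t = \{(\x^{(i)}, \z^{(i)}_{<t}, z^{(i)}_t)\}_{i=1}^m$. This is the natural projection that treats position $t$ as a supervised learning instance: input $(\x, \z_{<t}) \in \cx \times \cz_{t-1}$ and label $z_t \in \bbD$. The samples in $S_t$ are i.i.d.\ from the marginal $\cd_t$ obtained by pushing $\cd$ through the map $(\x, \z) \mapsto (\x, \z_{<t}, z_t)$. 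Because $\cd$ is realizable by some $h = (h_1, \dots, h_T) \in \ch$, and by the product decomposition we have $h(\x, \z_{<t}) = h_t(\x, \z_{<t}) = z_t$ almost surely under $\cd$, the marginal $\cd_t$ is realizable by $h_t \in \ch_t$ in the standard PAC sense.

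Next I invoke the PAC learner for each $\ch_t$ on $S_t$, using parameters $\epsilon' = \epsilon/T$ and $\delta' = \delta/T$. Taking $m = m(\epsilon/T, \delta/T)$, each learner returns, with probability at least $1 - \delta/T$, a hypothesis $\hat{h}_t \in \ch_t$ whose generalization error under $\cd_t$ satisfies
\[
\Pr_{(\x, \z) \sim \cd}\bigl[\hat{h}_t(\x, \z_{<t}) \ne z_t\bigr] \le \epsilon/T.
\]
Define $\hat{h} = (\hat{h}_1, \dots, \hat{h}_T) \in \ch$. A union bound over the $T$ learners gives that with probability at least $1 - \delta$ all the per-position error bounds hold simultaneously, and a second union bound over $t \le T$ inside the probability yields
\[
\Pr_{(\x, \z) \sim \cd}\bigl[\exists\, t \le T : \hat{h}(\x, \z_{<t}) \ne z_t\bigr] \le \sum_{t=1}^T \Pr\bigl[\hat{h}_t(\x, \z_{<t}) \ne z_t\bigr] \le \epsilon,
\]
which is exactly the AR learnability guarantee. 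For the efficient version, running $T$ polynomial-time PAC learners sequentially is still polynomial in all parameters.

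There is no real obstacle here; the only point that requires care is making sure that (i) the derived marginals $\cd_t$ are genuinely realizable by $\ch_t$ (which follows from the product decomposition of $\ch$ and the deterministic realizability assumption), and (ii) a single shared sample of size $m(\epsilon/T, \delta/T)$ from $\cd$ can serve all $T$ sub-learners, since each $S_t$ is an i.i.d.\ sample from $\cd_t$ by construction (the sub-learners are applied to different projections of the same data, which is permitted because PAC learnability only requires an i.i.d.\ sample of the appropriate size, not independence across learners). The two nested union bounds—one over the failure events of the $T$ PAC learners and one over the $T$ next-token prediction errors—are what yield the $\epsilon/T, \delta/T$ rescaling in the sample complexity.
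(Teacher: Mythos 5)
Your proposal is correct and follows essentially the same route as the paper: project the sample onto the $T$ per-position supervised problems, note each marginal $\cd_t$ is realizable by $\ch_t$, run the PAC learners with parameters $\epsilon/T,\delta/T$, and combine via the two union bounds. The paper's proof is just a more condensed version of this same reduction, so there is nothing to add.
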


The proof (in Appendix \ref{app:proofs}) is a simple reduction using the standard notion of PAC Learnability.

\textbf{Linear Decoder} 

From Theorem \ref{thm:ar_learnability}, efficiently learnable classes induce classes that are efficiently learnable in the Auto-Regressive setting. 
For example, by letting $\ch_t$ be a class of linear functions, we can use known results on learning linear classifiers to show that the induced AR hypothesis class is efficiently learnable. We define the linear AR hypothesis class as follows.

\begin{definition}
    Let $\psi : \bbD \to \reals^d$ be some embedding of the dictionary. With some abuse of notations, for $\z \in \bbD^t$ we define $\psi(\z) = [\psi(z_1), \dots, \psi(z_t)] \in \reals^{d \times t}$. Fix some $t$, let $\W \in \reals^{\bbD \times d \times (n+t)}$, and for all $\x \in \cx$ and $\z \in \cz_t$ define $h_{\W}(\x,\z) = \arg \max_{D \in \bbD}\inner{W_{D},\psi([\x,\z])}$. Denote the function class of all linear predictors $\ch^\Lin_t = \{h_\W ~:~ \W \in \reals^{\bbD \times d \times (n+t)}\}$.
\end{definition}
Observe that the class $\ch^\Lin_t$ is PAC-learnable in polynomial time. Under some margin conditions and using a convex surrogate loss function, this class is in fact learnable using SGD \citep{shalev2014understanding}. Therefore, for the linear AR hypothesis class $\ch^\Lin = \ch^\Lin_1 \times \dots \times \ch^\Lin_T$, we get that $\ch^\Lin$ is efficiently learnable in the Auto-Regressive setting.

\subsection{Approximation Results}
\label{sec:approximation}

We showed that when the AR hypothesis class $\ch$ is induced from a sequence of (efficiently) learnable hypothesis classes, then $\ch$ is also (efficiently) AR learnable. In particular, $\ch^\Lin$ is efficiently AR learnable, as a product of linear classes. We now show that while learnability transfers from the classical setting to the AR setting, in AR learning we can get much stronger \emph{approximation} guarantees. In fact, while linear classes are relatively limited in the standard setting, we show that the linear AR class $\ch^\Lin$ is extremely powerful. Namely, we show that linear AR functions can efficiently approximate any Turing computable function.
 
We first need a proper definition of what are the functions that AR hypotheses ``compute''.
For some AR hypothesis $h$, define the output of the auto-regression process at time $t$ to be $h^{(t)}(\x)$, defined recursively by: $h^{(1)}(\x) = h(\x, \emptyset)$, and $h^{(t)}(\x) = h\left(\x, \left(h^{(1)}(\x), \dots, h^{(t-1)}(\x)\right)\right)$.
For now, we focus on AR hypotheses that are evaluated for $T$ steps, for some fixed $T \in \naturals$. In Section \ref{sec:length_complexity} we discuss how the choice of $T$ (length complexity) interacts with different measures of complexity.
We define the function computed (approximated) by $h$ as follows:
\begin{definition}
    \label{def:target_approx}
    Fix some target $f : \bbD^n \to \bbD$ and some AR hypothesis $h$. Then, we say that $h$ \underline{computes} $f$, if for every input $\x \in \bbD^n$ we have $h^{(T)}(\x) = f(\x)$. Additionally, for some distribution $\cd$ over $\bbD^n$, we say that $h$ \underline{$\epsilon$-approximates} $f$ w.r.t. $\cd$, if $\Pr_\cd\left[h^{(T)}(\x) \ne f(\x)\right] \le \epsilon$.    
\end{definition}

In other words, we say that $h$ computes $f$ if after running auto-regression for $T$ steps, it outputs a value that agrees with $f$. Note that we ignore all the intermediate outputs of $h$ and observe only the final output. This is in alignment with common practice, where we let language models use arbitrarily long chain-of-thought/scratchpad before arriving at the final answer\footnote{Here we assume that $f$ outputs a single token in $\bbD$, and therefore observe only the last token produced by the auto-regression. However, we note that this can be extended to the case where $f$ outputs multiple tokens, and we observe a sequence of tokens at the end of the auto-regression.}.

Next, we show that if some AR class $\ch$ is learnable, then auto-regressive learning of distributions realizable by $h \in \ch$ returns an approximator for the function computed by $h$:

\begin{theorem}
    \label{thm:app_learning}
    Assume that $\ch$ is (efficiently) AR Learnable with sample complexity $m(\epsilon, \delta)$. Then, there is an (efficient) algorithm s.t. for any $\epsilon,\delta$ and distribution $\cd$ realizable by some $h \in \ch$, given a sample of size $m(\epsilon, \delta)$, returns w.p. $\ge 1-\delta$ a function $\hat{h}$ s.t. $\hat{h}^{(T)}$ $\epsilon$-approximate $h^{(T)}$ w.r.t. $\cd$.
\end{theorem}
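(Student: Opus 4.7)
The plan is to invoke AR Learnability as a black box to obtain a candidate $\hat{h}$ whose next-token error on $\cd$ is at most $\epsilon$ (with probability $\ge 1-\delta$ over the sample), and then to translate this ``teacher-forced'' guarantee into an auto-regressive guarantee $\hat{h}^{(T)} \approx h^{(T)}$ by a short induction on the time step $t$.

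The first observation is that since $\cd$ is realizable by $h$ and $h$ is deterministic, for $(\x,\z)\sim\cd$ the sequence $\z$ must, with probability $1$, coincide with the auto-regressive trajectory of $h$ starting from $\x$; that is, $z_t = h^{(t)}(\x)$ and $\z_{<t} = (h^{(1)}(\x),\dots,h^{(t-1)}(\x))$ for all $t\le T$. So the ``realizable'' samples are exactly samples of $\x$ together with the deterministic rollout of $h$ on $\x$. Feeding such samples into the AR learning algorithm of $\ch$ (with accuracy parameters $\epsilon,\delta$) gives, by assumption, a hypothesis $\hat{h}\in\ch$ with
\[
\Pr_{(\x,\z)\sim\cd}\left[\exists\, t\le T \text{ s.t. } \hat{h}(\x,\z_{<t})\ne z_t\right]\;\le\;\epsilon,
\]
with probability $\ge 1-\delta$ over the training sample. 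Call an $\x$ \emph{good} if no such $t$ exists on its trajectory; the good set has $\cd$-probability at least $1-\epsilon$.

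The second step is the inductive coupling: for any good $\x$, I claim $\hat{h}^{(t)}(\x)=h^{(t)}(\x)$ for every $t\le T$. Base case $t=1$: $\hat{h}^{(1)}(\x)=\hat{h}(\x,\emptyset)=z_1=h^{(1)}(\x)$ by goodness at $t=1$. Inductive step: assuming $\hat{h}^{(s)}(\x)=h^{(s)}(\x)=z_s$ for all $s<t$, the input context for $\hat{h}$ at step $t$ is exactly $\z_{<t}$, so goodness at $t$ yields $\hat{h}^{(t)}(\x)=\hat{h}(\x,\z_{<t})=z_t=h^{(t)}(\x)$. In particular $\hat{h}^{(T)}(\x)=h^{(T)}(\x)$ on the good set, which proves
\[
\Pr_{\cd}\left[\hat{h}^{(T)}(\x)\ne h^{(T)}(\x)\right]\;\le\;\epsilon.
\]

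The only ingredient that is not entirely mechanical is ensuring that the sample we hand to the AR Learner is distributed according to $\cd$ (and not some shifted distribution induced by rolling out $\hat{h}$ during training). This is where teacher forcing is essential: we train on the ground-truth trajectory $\z$ produced by $h$, so the AR Learnability assumption applies verbatim and there is no distribution shift to control. The rest of the proof is a union bound hidden inside the AR Learnability statement (failures over $t\le T$ are already absorbed into $\epsilon$), combined with the deterministic chain above. Efficiency transfers for free, since the only computation performed beyond the AR Learner itself is the identity map on its output.
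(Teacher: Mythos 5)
Your proposal is correct and follows essentially the same route as the paper: invoke AR Learnability on the realizable distribution (whose trajectories, by realizability and determinism, coincide with the rollout of $h$), and then run the same induction showing $\hat{h}^{(t)}(\x)=h^{(t)}(\x)$ on the $\ge 1-\epsilon$ probability set where the next-token predictions all agree. The only difference is cosmetic — you make explicit the step that $z_t = h^{(t)}(\x)$ almost surely, which the paper's proof uses implicitly.
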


The proof follows by induction from the definitions (see Appendix \ref{app:proofs}).
Theorem \ref{thm:app_learning} shows that using AR learning, we can learn to approximate the function computed by the underlying AR function $h$.

\textbf{Approximation Capacity of Linear Hypotheses}

We now limit ourselves to a dictionary with only two tokens $\bbD = \{0,1\}$, to be compatible with standard analysis of computations with Boolean inputs/outputs. We will show that linear AR functions can approximate a very large class of functions---namely, the class of \emph{linear threshold circuits}.

\begin{definition}
    A linear threshold function is a func. of the form $x \mapsto \sigma(\inner{\bw,x} + b)$ for $\sigma(x) = \ind_{x\ge0}$.
    A linear threshold circuit is a Boolean circuit where every gate computes a linear threshold function.
\end{definition}

The following result shows that linear AR functions can approximate any linear threshold circuit:

\begin{theorem}
    \label{thm:boolean_circuit}
    Assume that $f : \{0,1\}^n \to \{0,1\}$ can be computed by a linear threshold circuit with at most $T$ gates. Then, $f$ can be computed by a linear AR function $h \in \ch^\Lin$.
\end{theorem}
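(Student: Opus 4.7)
The plan is to simulate the linear threshold circuit step-by-step through the auto-regressive process, using one AR step per gate. The key observation is that for $\bbD = \{0,1\}$, the definition of $h_\W \in \ch^\Lin_t$ reduces at each step to a single linear threshold function: $h_\W(\x,\z) = 1$ iff $\inner{W_1 - W_0, \psi([\x,\z])} \ge 0$. So at step $t$ the AR model outputs exactly some linear threshold function of the embedded context $\psi([\x, h^{(1)}(\x), \dots, h^{(t-1)}(\x)])$. The goal is to choose these weights at each step so that the $t$-th AR output equals the output of the $t$-th gate of the circuit.

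Concretely, I would first fix a simple embedding, e.g.\ $\psi : \{0,1\} \to \reals$ with $\psi(0)=0$ and $\psi(1)=1$ (so $d=1$); this way a linear functional on $\psi([\x,\z])$ is just a weighted sum of the bits. Then I would take a topological ordering $g_1,\dots,g_T$ of the circuit's gates so that each $g_t$ only depends on the input bits $x_1,\dots,x_n$ and the earlier gate outputs $g_1,\dots,g_{t-1}$. For each $t$, gate $g_t$ is a linear threshold function $\sigma(\inner{\vw^{(t)}, [\x, g_1(\x), \dots, g_{t-1}(\x)]} + b^{(t)})$, so I would define $W_1^{(t)} - W_0^{(t)}$ to place the coefficients of $\vw^{(t)}$ in the corresponding $n+t-1$ positions (and zero elsewhere), and encode the bias $b^{(t)}$ either by a constant offset between $W_0^{(t)}$ and $W_1^{(t)}$ or by reserving one always-on token position for a bias term. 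Tie-breaking at equality is handled by making the $\ge$ convention of $\sigma$ agree with the $\arg\max$'s tie-breaking rule, which is a minor bookkeeping choice.

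With this construction in place, the proof is an easy induction on $t \le T$: assuming $h^{(s)}(\x) = g_s(\x)$ for all $s < t$, the embedded context $\psi([\x, h^{(1)}(\x), \dots, h^{(t-1)}(\x)])$ agrees with $[\x, g_1(\x), \dots, g_{t-1}(\x)]$, so plugging in the weights defined above yields $h^{(t)}(\x) = \sigma(\inner{\vw^{(t)}, \cdot} + b^{(t)}) = g_t(\x)$. Taking $g_T$ to be the output gate of the circuit then gives $h^{(T)}(\x) = f(\x)$ for every $\x \in \{0,1\}^n$, which is exactly the claim.

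The proof is essentially a direct simulation, so there is no deep obstacle; the only care required is matching the bookkeeping between the formal definition of $\ch^\Lin$ (frame-wise weight tensors $\W \in \reals^{\bbD \times d \times (n+t)}$, argmax-based output) and the standard notation for linear threshold gates (weight vector, bias, $\sigma$-activation with $\ge 0$). In particular, one must verify that $\ch^\Lin$ is rich enough to allow a \emph{different} weight tensor at each time step $t$ (which it is, since $\ch^\Lin = \ch^\Lin_1 \times \cdots \times \ch^\Lin_T$ by construction), and handle the bias cleanly, either by an always-on coordinate or by allowing a nonzero difference between $W_0^{(t)}$ and $W_1^{(t)}$ on an unused coordinate.
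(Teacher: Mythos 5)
Your proposal is correct and follows essentially the same route as the paper: order the gates topologically (the paper sorts by depth), devote one auto-regressive step per gate, realize each threshold gate via the argmax of a linear functional over the input bits and previously generated gate outputs, and conclude by induction that $h^{(t)}$ equals the $t$-th gate, so $h^{(T)} = f$. Your additional bookkeeping on the embedding, bias encoding, and tie-breaking only makes explicit details the paper leaves implicit.
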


The proof of the above result uses the fact that a linear threshold function can be implemented using $\argmax$ over a linear function, in the case where $\bbD = \{0,1\}$ (full proof in Appendix \ref{app:proofs}).

We note that any Turing computable function can be computed by a linear threshold circuit of some size $T$ that scales polynomially with the runtime of the Turing machine (see e.g. \cite{arora2009computational}). Therefore, we get that linear AR functions can compute any Turing computable function, with only polynomial blow-up in run-time. This leads to the following result:

\begin{corollary}
    For any function $f$ that is Turing computable in time $T(n)$, and for any distribution $\cd$ over inputs of size $n$, there exists a dataset of strings of tokens, each of size $\poly(T(n))$, s.t. training a linear AR model over this dataset efficiently recovers a function that approximates $f$ w.r.t. $\cd$.
\end{corollary}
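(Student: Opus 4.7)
The plan is to chain the three preceding theorems with the classical simulation of Turing machines by linear-threshold circuits. First, I would invoke the standard result (see e.g.\ \cite{arora2009computational}) that a Turing machine running in time $T(n)$ on inputs of length $n$ can be simulated by a linear threshold circuit $C$ with $\poly(T(n))$ gates that computes the same function $f : \{0,1\}^n \to \{0,1\}$. This converts the computational hypothesis into the circuit hypothesis required by Theorem \ref{thm:boolean_circuit}, which yields a linear AR hypothesis $h \in \ch^\Lin$ that, when run for $T' = \poly(T(n))$ auto-regressive steps, satisfies $h^{(T')}(\x) = f(\x)$ for every $\x \in \{0,1\}^n$, in the sense of Definition \ref{def:target_approx}.

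Next, I would construct the dataset $D$ explicitly: draw i.i.d.\ samples $\x_1,\dots,\x_m$ from $\cd$, and for each $\x_i$ form the full CoT string $(\x_i, h^{(1)}(\x_i), \dots, h^{(T')}(\x_i)) \in \cx \times \cz_{T'}$ by running the AR process of $h$. By construction, the induced distribution on $\cx \times \cz_{T'}$ is realizable by $h$, and every string has length $n + T' = \poly(T(n))$, meeting the claimed token-budget bound.

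Finally, I would close the loop via learnability. The class $\ch^\Lin = \ch^\Lin_1 \times \dots \times \ch^\Lin_{T'}$ is a product of linear classes, each efficiently PAC learnable (e.g.\ by SGD on a convex surrogate, as noted right after the definition of $\ch^\Lin_t$), with sample and run-time complexity polynomial in the input dimension $d(n+T')$. Applying Theorem \ref{thm:ar_learnability} shows $\ch^\Lin$ is efficiently AR learnable with sample complexity $m(\epsilon/T', \delta/T')$, which is still $\poly(n, T(n), 1/\epsilon, 1/\delta)$. Theorem \ref{thm:app_learning} then guarantees that the learned predictor $\hat h$ satisfies $\Pr_\cd[\hat h^{(T')}(\x) \ne h^{(T')}(\x)] \le \epsilon$ with probability at least $1-\delta$. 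Since $h^{(T')} \equiv f$ on the support of $\cd$, this is exactly the desired $\epsilon$-approximation of $f$ under $\cd$.

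The only real bookkeeping obstacle is making sure that every polynomial stays polynomial under composition: the circuit size $\poly(T(n))$ becomes the AR depth $T'$ and the per-step input dimension, which in turn enters both the per-class sample complexity and the $T'$-factor from the union bound in Theorem \ref{thm:ar_learnability}. Since each factor is polynomial in $T(n)$ and $n$, the final dataset size, per-string length, and training time are all $\poly(T(n), n, 1/\epsilon, 1/\delta)$, giving the efficient recovery claimed by the corollary.
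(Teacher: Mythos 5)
Your proof is correct and fills in exactly the pipeline the paper intends but only sketches in a single sentence: Turing machine to $\poly(T(n))$-size threshold circuit, Theorem~\ref{thm:boolean_circuit} to a linear AR hypothesis $h$, a dataset formed by running $h$'s chain-of-thought on i.i.d.\ draws from $\cd$, and then Theorems~\ref{thm:ar_learnability} and~\ref{thm:app_learning} to conclude $\epsilon$-approximation of $f$. The bookkeeping you do at the end (tracking that $T'$, the per-example string length, the union-bound blow-up, and the per-step sample complexity all stay polynomial) is the right thing to verify and matches what the paper leaves implicit.
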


To prove the above, we consider a dataset generated by a linear model simulating the target Turing machine which computes $f$.

\subsection{Length Complexity}
\label{sec:length_complexity}
We showed that even simple classes like linear AR predictors can approximate any Turing computable function. Since linear predictors can be learned efficiently, we get a learning scheme that can efficiently learn virtually any function of interest. This is in contrast with the standard supervised learning setting, where efficiently learnable function classes are typically very limited in their expressive power. However, we note that the complexity of learning did not magically ``disappear''. To make learning possible, we require that the learner has, during learning, access to a sequence of tokens representing the internal CoT generated by the target it aims to imitate. While the length of this sequence is still reasonable (polynomial in the problem parameters), acquiring data with such long sequences might be costly, or even impossible.

In this section we introduce \emph{length complexity}, a new notion of learning complexity that quantifies the number of intermediate tokens required for learning some concept class, i.e. the length of the CoT supervision provided to the model during training. The length complexity complements common complexity measures such as sample and run-time complexity, and we show that in some cases we can trade off sample/computational complexity for length complexity, and vice versa.

We begin with a formal definition of \emph{length complexity}.
Fix some distribution over $\bbD^n$, some AR hypothesis class $\ch$ and some target concept class $\cf$ of functions $\bbD^n \to \bbD$. The definition below extends Definition \ref{def:target_approx} to function classes, which allows an explicit discussion on length complexity.

\begin{definition}
    We say that $\ch$ computes $\cf$ with length complexity $T$, if $T$ is the minimal number satisfying that for every $f \in \cf$ there exists some $h \in \ch$ such that, for all $\x \in \bbD^n$ we have $h^{(T)}(\x) = f(\x)$. Additionally, we say that $\ch$ $\epsilon$-approximates $\cf$ with length complexity $T$ if for every $f \in \cf$ there exists some $h \in \ch$ s.t. $\Pr_\cd\left[h^{(T)}(\x) \ne f(\x)\right] \le \epsilon$.
\end{definition}

From Theorem \ref{thm:boolean_circuit} we get that the class of linear threshold circuits of size $T$ can be $\epsilon$-approximated using linear AR functions with \emph{length complexity} $T$. For small circuits this might not be an issue, but otherwise this dependence may be problematic. We expect that taking a richer AR hypothesis class $\ch$ would result in reduction of the length complexity.
In the rest of this section, we discuss the interplay between the choice of the AR hypothesis class and the different measures of complexity that it induces: sample complexity, computational complexity and length complexity.

\textbf{Length Complexity of Parities}

To demonstrate a concrete analysis of length complexity, we consider the well-studied problem of learning parities, a natural extension of the XOR problem \citep{minsky2017perceptrons}. In the parity learning problem, the inputs are sequences of $n$ bits, and the label is determined by the parity of the sum of an unknown subset of bits from the input. This problem is known to be computationally hard in some settings. For example, Statistical Query (SQ) algorithms and variants of gradient-descent need $\Omega(2^n)$ steps to solve the problem \citep{kearns1998efficient, shalev2017failures, abbe2018provable, malach2022hardness}, and it is hard to solve with limited memory \citep{raz2018fast}.

We now formally define the set of parity functions. Assume $\mathbb{D} = \{0,1\}$ (Boolean inputs).
For some subset $A \subseteq [n]$, define the parity function over $A$ by $\chi_A(\x) = \sum_{i \in A} x_i \mod 2$.
Let $\cp_n$ be the class of all parity functions, $\cp_n = \{\chi_A ~:~A \subseteq [n]\}$.
It is known that parities can be computed using $O(\log n)$ size linear threshold circuit \citep{kautz1961realization}. So, Theorem \ref{thm:boolean_circuit} implies that a linear AR model can compute any parity function with logarithmic length complexity:
\begin{theorem}
\label{thm:parity_lc}
The class $\cp_n$ can be computed using $\ch^\Lin$, with length complexity $O(\log n)$.
\end{theorem}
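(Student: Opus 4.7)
The plan is to reduce the statement directly to Theorem \ref{thm:boolean_circuit} via a size-efficient linear-threshold-circuit construction for parity. First I would invoke the classical result attributed to Kautz (1961), which shows that for every $A \subseteq [n]$ the parity function $\chi_A(\x) = \sum_{i \in A} x_i \bmod 2$ can be realized by a linear threshold circuit with $T_n = O(\log n)$ gates; the subset $A$ enters only through the weights on the input layer, so the size bound $T_n$ is uniform over $A \in 2^{[n]}$. (Intuitively, the sum $\sum_{i \in A} x_i$ takes at most $n{+}1$ values and can be encoded by only $O(\log n)$ threshold gates with large real weights, from which the least significant bit—the parity—is extracted.)

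Next I would apply Theorem \ref{thm:boolean_circuit} to each such circuit. That theorem compiles a linear threshold circuit with $T$ gates into an element $h \in \ch^\Lin$ whose $T$-step auto-regression $h^{(T)}$ matches the circuit's output: at step $t$, one gate in a topological ordering of the circuit is simulated by an $\argmax$ over a linear function on $\bbD = \{0,1\}$, with the gate's inputs read off the prefix $[\x, h^{(1)}(\x), \dots, h^{(t-1)}(\x)]$ via an appropriate choice of $\W$. Applied to the $O(\log n)$-size circuit for $\chi_A$, this produces an $h_A \in \ch^\Lin$ with $h_A^{(T_n)}(\x) = \chi_A(\x)$ for every $\x \in \{0,1\}^n$, which is exactly the claim that $\ch^\Lin$ computes $\cp_n$ with length complexity $O(\log n)$.

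The one subtle point to verify is that the reduction of Theorem \ref{thm:boolean_circuit} uses exactly one AR step per gate, with no multiplicative overhead; a constant per-gate overhead would preserve the $O(\log n)$ bound, but a logarithmic per-gate overhead would inflate the length complexity to $O(\log^2 n)$. This is where I expect the main (minor) obstacle to lie: one must check that after placing the gates in a topological order, the prefix available at step $t$ already contains every input the $t$-th gate needs (either an original input bit from $\x$ or the output of an earlier gate emitted as some $h^{(t')}(\x)$ with $t' < t$), so that a single $\argmax$ over a linear functional of the prefix suffices. Since the weights of the circuit translate directly into the weights of $\W$ and no auxiliary tokens are required, the reduction is genuinely one-to-one and the $O(\log n)$ length complexity bound follows.
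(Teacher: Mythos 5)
Your proposal is correct and follows exactly the paper's own argument: the paper proves this theorem precisely by citing the Kautz (1961) construction of an $O(\log n)$-gate linear threshold circuit for parity and then applying Theorem~\ref{thm:boolean_circuit}, whose reduction indeed uses one auto-regressive step per gate in a depth-sorted (topological) order. Your extra check that the prefix at step $t$ already contains all inputs needed by the $t$-th gate is exactly the inductive content of the paper's proof of Theorem~\ref{thm:boolean_circuit}, so there is no gap.
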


Since we showed that linear AR functions are efficiently learnable (Theorem \ref{thm:ar_learnability}), the above theorem implies that parities become efficiently learnable given $O(\log n)$ intermediate tokens. This is in contrast to the standard supervised learning setting, where linear functions cannot approximate parities \citep{daniely2020learning}. We note that a similar result on learning parities with intermediate tokens appears in \cite{wies2022sub}, but with $O(n)$ length complexity (instead of $O(\log n)$).

We next show that by taking more complex hypothesis classes we can reduce the length complexity of computing $\cp_n$. However, this comes at a cost of increasing either the sample or the computational complexity.
We define a sequence of AR classes of growing complexity for computing $\cp_n$. For every $k \le n$, let $\cp_{n,k}$ be the class of parities over subsets of size $\le k$, namely $\cp_{n,k} = \left\lbrace\chi_A ~:~ A \in \binom{[n]}{\le k}\right\rbrace$. 
The larger $n$ and $k$ are, the harder it is to learn $\cp_{n,k}$ (via supervised learning). In particular, there are known lower bounds on learning $\cp_{n,k}$ using Statistical Query (SQ) algorithms, a large family of algorithms that include variants of gradient-based learning algorithms \citep{blum2003noise}. Roughly speaking, learning $\cp_{n,k}$ using SQ algorithms requires run-time of $\binom{n}{\le k} = O((n/k)^k)$, and the sample complexity of $O(k \log n)$.
We define $\ch^{(k)} = \cp_{n,k} \times \cp_{n+1,k} \times \dots$, and show the following:
\begin{theorem}
    \label{thm:parity_lc2}
    $\ch^{(k)}$ can compute $\cp_n$ with length complexity $\Theta(n/k)$.
\end{theorem}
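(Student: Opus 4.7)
The plan is to establish matching upper and lower bounds, both of order $n/k$, and hence $\Theta(n/k)$.

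For the upper bound, given any target $\chi_A \in \cp_n$, I would accumulate the parity of $A$ step by step. Let $A_1$ consist of the first $\min(|A|,k)$ indices in $A$ and set $z_1 = \chi_{A_1}(\x)$, which lies in $\cp_{n,k}$. For $t \ge 2$, let $A_t$ consist of the next batch of up to $k-1$ indices of $A$ not used previously, and set $z_t = z_{t-1} \oplus \chi_{A_t}(\x)$. Each such step XORs one previous intermediate token with at most $k-1$ fresh input bits, i.e.\ it is a parity of $\le k$ of the $n+t-1$ available tokens, so it belongs to $\cp_{n+t-1,k}$. After $T = 1 + \lceil (|A|-k)/(k-1)\rceil = O(n/k)$ steps the running XOR equals $\chi_A(\x)$, uniformly over $\chi_A \in \cp_n$.

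For the lower bound I would take $f = \chi_{[n]}$ as the hardest witness. I would identify each intermediate token $z_t$ with its GF(2)-coefficient vector $v_t \in \{0,1\}^n$, so that $z_t = \sum_i v_t[i]\,x_i \bmod 2$. Since the step-$t$ hypothesis $h_t \in \cp_{n+t-1,k}$ XORs at most $k$ of the available tokens $\{x_1,\dots,x_n,z_1,\dots,z_{t-1}\}$, I can write
\[
v_t \;=\; \sum_{i \in S_t} e_i \;+\; \sum_{s \in R_t} v_s \pmod{2},
\]
with $S_t \subseteq [n]$, $R_t \subseteq [t-1]$, and $|S_t|+|R_t|\le k$. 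A one-line induction on $t$ then shows that if $i \notin \bigcup_{t' \le t} S_{t'}$, then $v_t[i]=0$: every term on the right-hand side already has a $0$ in coordinate $i$. To compute $\chi_{[n]}$ we need $v_T = \mathbf{1}$, which forces $\bigcup_t S_t = [n]$; together with $|S_t|\le k$ this gives $n \le \sum_t |S_t| \le kT$, i.e.\ $T \ge n/k$.

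The main obstacle is the lower bound, and inside it the essential observation is that an input bit $x_i$ can affect $z_T$ only if it is \emph{directly} referenced as one of the $\le k$ tokens selected by some $h_t$; indirect contributions through intermediate tokens never appear in $v_T$'s $i$-th coordinate if $x_i$ was never picked. I expect the careful bookkeeping of the GF(2) recursion — and in particular the inequality $|S_t|+|R_t|\le k$ rather than only $|S_t|\le k$ — to be the most error-prone part, while the upper bound is a routine accumulation argument. Note that the argument crucially exploits the GF(2)-linear structure of parities; the analogous ``direct reference'' invariant would fail for a general Turing-computable target, which is consistent with Theorem~\ref{thm:boolean_circuit} yielding much stronger computational power with larger length complexity.
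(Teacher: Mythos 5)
Your proposal is correct and follows essentially the same route as the paper: the upper bound realizes any target parity by $O(n/k)$ parities of arity at most $k$ (you use a sequential accumulation chain where the paper uses a $k$-ary tree of parity gates, both of size $O(n/k)$), and the lower bound is the paper's counting argument that every intermediate token is a parity supported only on input coordinates directly referenced so far, so computing $\chi_{[n]}$ forces $kT \ge n$. Your explicit GF(2) bookkeeping with $S_t$ and $R_t$, and the invariant that never-referenced coordinates stay zero, is in fact a slightly more careful rendering of the paper's terse claim that the support ``can increase by at most $k$'' per step.
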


To prove the above result, we show that any parity over $n$ bits can be computed by constructing a ``tree'' of $k$-order parities, which reduces the length complexity by a factor of $k$ (see Appendix \ref{app:proofs}). This decrease in length complexity comes at the cost of increasing the computational complexity of the learning \emph{exponentially} with $k$ (for SQ algorithms and variants of GD). While the exact interplay between computational and length complexity depends on the learning problem, this result shows that sometimes decreasing the length complexity makes the problem computationally hard to learn. We believe that a fundamental understanding of the length complexity of different problems will allow us to better understand AR predictors. For example, discovering an intrinsic complexity measure for hypothesis classes (analogous to VC dimension or SQ dimension) that can be used to derive length complexity bounds is of particular interest. We leave such an investigation to future research.

\section{Experiments}
\begin{figure}
    \centering
    \includegraphics[scale=0.18]{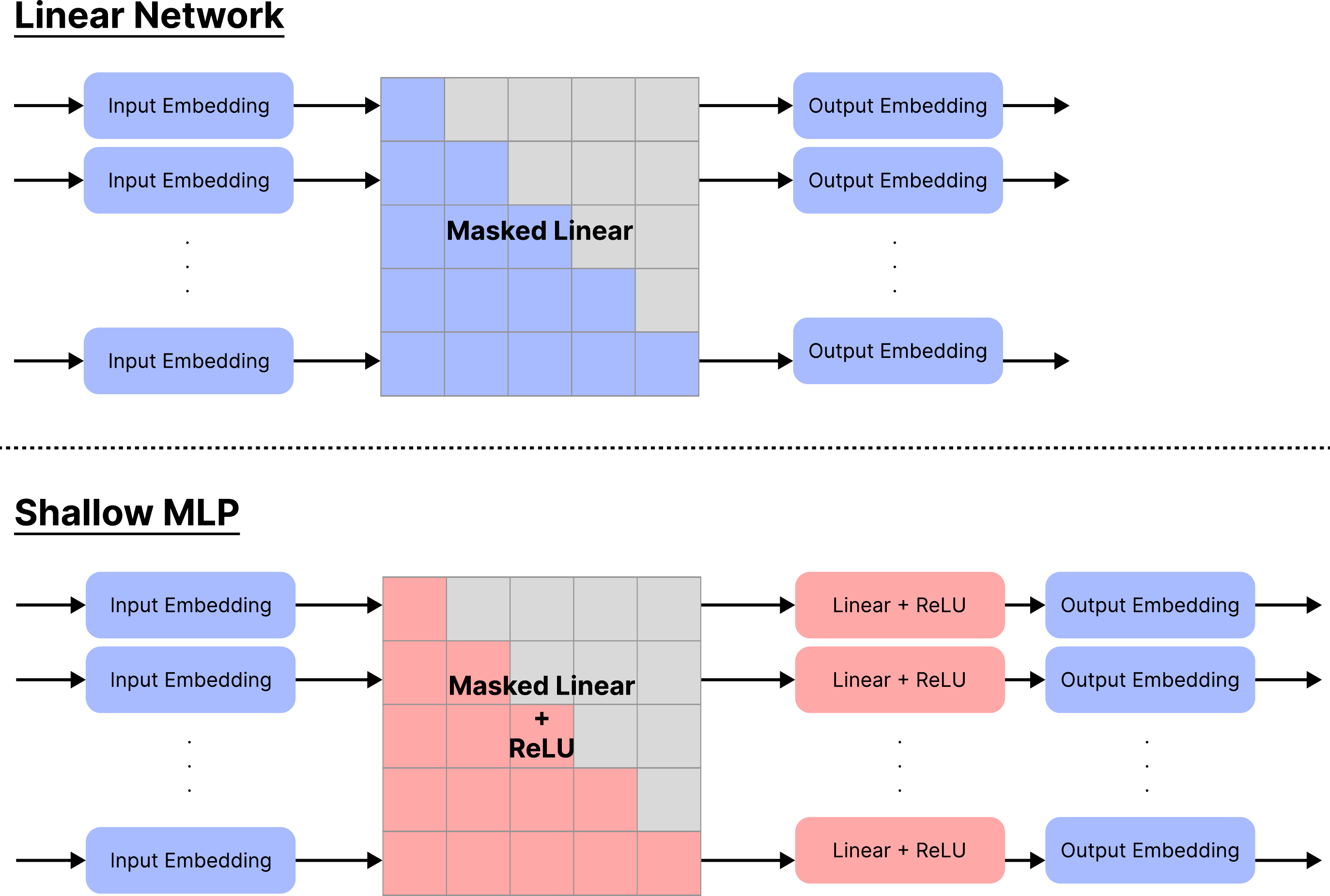}
    \caption{Illustration of the linear network and the MLP used in our experiments.}
    \label{fig:architecture}
    \vspace{-0.1in}
\end{figure}

We now turn to empirically validate our theoretical results, showing that very simple models perform surprisingly well when trained auto-regressively to perform next-token prediction. We start by training a simple linear model on a dataset of short stories, and then evaluate the performance of a small MLP on a task of arithmetic computations.

\vspace{-0.15in}
\subsection{Tiny Stories}
\begin{figure}
    \centering

\begin{mdframed}[style=textbox,userdefinedwidth=3in]
\scriptsize
\begin{tabularx}{\columnwidth}{lX}
\textbf{Prompt \#1:} & He was very scared \\
\hline
\textbf{Output:} & \emph{He was very scared. He cried out loud, but then he heard a voice. He was afraid of the dark monster. He ran to the bear. He had a pain in his hand. He screamed and ran away. He was never seen again. He was scared of the monster and he went on. He never\dots} \vspace{0.1in}\\

\textbf{Prompt \#2:} & Adam was hungry, so \\
\hline
\textbf{Output:} & \emph{Adam was hungry, so he wanted to eat the dessert. But he knew it was \rbg{not a good at his sister}. He thought, ``Maybe I can eat this food, but I don't want to share with you.''
} \vspace{0.1in}\\
\textbf{Prompt \#3:} & Alice was tired, so \\
\hline
\textbf{Output \#1:} &  \emph{Alice was tired, \gbg{so she decided to take a nap}. She put a blanket on the ground and started to cry. Then, she heard a noise. It was a nearby tree.
} \\
\textbf{Output \#2:} &\emph{Alice was tired, \rbg{so she decided to go on an} \rbg{adventure}. She hopped on the way to go home and look for her...
}
\end{tabularx}
\end{mdframed}

\vspace{0.1in}
\begin{center}
\begin{scriptsize}
\begin{sc}
\begin{tabular}{lccccr}
\toprule
Model & Grammar  & Creativity & Consistency & Plot \\
 & {\tiny GPT-4 / LT} & {\tiny GPT-4} & {\tiny GPT-4} & {\tiny GPT-4} \\
\midrule
TS-33M & 8.0 $\pm$ 0.8 / 62\% & 7.2 $\pm$ 0.5 & 7.0 $\pm$ 1.2 & 6.9 $\pm$ 0.8 \\
TS-1M & 6.9 $\pm$ 0.9 / 59\% & 6.7 $\pm$ 1.0 & 6.0 $\pm$ 1.5 & 5.6 $\pm$ 1.3\\
Linear & 6.3 $\pm$ 2.0 / 64\% & 6.2 $\pm$  1.8 & 5.9 $\pm$ 1.8 & 5.2 $\pm$ 1.8   \\
\bottomrule
\end{tabular}
\end{sc}
\end{scriptsize}
\end{center}

    \caption{\textbf{Top}: Example prompts and outputs for Linear model trained on TinyStories (grammatical/conceptual errors in red). \textbf{Bottom}: Comparison between Transformer-and Linear models, average grades from GPT-4.}
    \label{fig:tinystories}
    \vspace{-0.2in}
\end{figure}

We test the efficiency of linear AR models on the TinyStories dataset \citep{eldan2023tinystories}, a synthetic dataset of short stories containing simple words.
We train a linear model with context length of $T = 64$ on this dataset. The model has only three layers: 1) a standard (linear) embedding layer, mapping tokens into a vector of dimension $d = 256$; 2) a linear layer mapping $d \times T$ to $d \times T$ (using standard masking for next-token prediction during training); 3) an output embedding layer mapping vectors of dimension $d = 256$ back into the output space of all tokens (see Figure \ref{fig:architecture}). To allow next-token prediction training, we apply masking on the second linear layer, so that each output token only has access to previous tokens in the sequence. While the resulting classifier is linear, we note that this model is not exactly the linear AR model analyzed previously, as we allow sharing some parameters (namely, the input/output embedding parameters) across the different sequence positions. However, this is a close proxy to the idealized linear model. The model is optimized with the cross-entropy loss, using a softmax operation applied to the outputs. Altogether, the resulting model has roughly 162M active parameters. The model is trained for 5\nicefrac{1}{2} hours on a single A100 machine.

\begin{figure*}[ht]
{\small
    \centering
    \begin{minipage}{0.60\textwidth}
    \begin{mdframed}[style=textbox]
    \begin{tabularx}{\textwidth}{lp{5cm}}
    \textbf{Prompt:} & 1394$\times$8618= \\
    \hline
    \textbf{MLP:} & (4$\times$1+9$\times$10+3$\times$100+1$\times$1000)$\times$ \newline (8$\times$1+1$\times$10+6$\times$100+8$\times$1000)=\newline$\vdots$\newline
    \textbf{\gbg{12013492}} \\
    \hline
    \textbf{GPT-4:} & The multiplication of 1394 and 8618 equals \textbf{\gbg{12},\gbg{01}\rbg{4},\rbg{05}\gbg{2}}. \\
    \hline
    \textbf{Answer:} & \textbf{12013492}
    \end{tabularx}
    \end{mdframed}
    \label{fig:mul_output}
    \end{minipage}
    \begin{minipage}{0.38\textwidth}
    \begin{sc}
    \begin{tabular}{lcr}
        \toprule
        Model & Acc. (exact/per-digit) \\
        \midrule
        \textbf{MLP-775M} & 96.9\% / 99.5\% \\
        \hline
        \textbf{GPT-3.5} & 1.2\% / 61.9\% \\
        \textbf{GPT-4}* & 5.3\% / 61.8\% \\
        \textbf{Goat-7B}* & 96.9\% / 99.2\% \\
        \bottomrule
    \end{tabular}
    \end{sc}
    \label{tab:mul_table}
    \end{minipage}
        \caption{\textbf{Left:} Output of the MLP and GPT-4 on the 4-digit multiplication task (full output in Appendix \ref{app:figs}). \textbf{Right:} Performance of GPT vs. MLP model on the 4-digit multiplication task. *For GPT-4 and Goat-7B, we use the numbers as repored in \cite{liu2023goat}.}
        }
\end{figure*}

We use our model to generate story paragraphs, given some initial sentence. Similarly to \citet{eldan2023tinystories}, we use GPT-4 to grade 50 output examples based on grammar, creativity, consistency with the story’s beginning and
whether the plot makes sense  (see Appendix \ref{app:grading} for further details). We also evaluate the grammar of the generated text using the LanguageTool \cite{languagetool} grammar checker, and report the percentage of generations that had no grammatical errors (generating 5 outputs per story). While the results are certainly inferior in quality to transformer-based language models, we note that the linear predictor often does produce coherent and grammatically correct text. In Figure \ref{fig:tinystories} we show some example of prompts and the resulting output of the model. We emphasize that our goal is not to claim that linear models are better or even comparable to transformers, but rather to show that these extremely simple models achieve non-trivial language modeling capabilities when trained on high-quality data.



\subsection{Multiplication}

We now turn to demonstrate the power of next-token prediction with CoT reasoning for arithmetic tasks. We focus on the task of multiplying two 4-digit numbers, which has been shown to be challenging even for huge language models such as GPT-4 \citep{liu2023goat}. For this task, we train a simple Multi-Layered Perceptron (MLP) with four layers: 1) a standard (linear) embedding layer, from tokens to dimension $d=128$; 2) a linear layer with a ReLU activation, applied across all the context window, mapping the input of $d \times T$ to an output of $d \times T$ (where we use a context length of $T=307$); 3) a linear layer with a ReLU activation applied per token, mapping from $d$ to $d$; 4) a final output embedding, mapping back to the space of all tokens (see Figure \ref{fig:architecture}). Similarly to the linear network, we mask future positions in the second layer. We note that while this network has non-linearity (unlike the previous model), it is still very simple compared to standard transformer-based networks (e.g., we use no attention mechanism). Altogether, our MLP has 775M active parameters.

Recently, a paper by \cite{liu2023goat} instrodced Goat, a relatively small transformer fine-tuned from the LLaMA model that was able to outperform GPT-4 in various arithmetic tasks, when trained on data with intermediate calculations. We follow a similar procedure for training our model on 4-digit multiplication, with some key differences. First, we give more intermediate steps than in \cite{liu2023goat}, essentially unfolding the multiplication algorithm in the training sequences (see Figure \ref{fig:mul_output}). Second, we use a custom tokenization scheme, where we tokenize separately single digits ($1,2,3,\dots$), signs ($\times, +, =$) and also pairs of digits with multiplication sign ($1\times2$, $3 \times 5$, etc). This tokenization allows the model to quickly solve the single-digit multiplication task (by mapping pairs of multiplied digits to their product), which is a crucial tool in the multiplication algorithm. Finally, we also add zero-padding to some of the numbers, to get all strings to have the same length. 

We split all pairs of 4-digit numbers arbitrarily, use $75\%$ for training, and keep the rest for validation. The network is trained from scratch for 17 hours on a single A100 GPU, going over 100M sequences (307M tokens) sampled uniformly from the training set. In Table \ref{tab:mul_table} we compare the performance of our simple MLP (evaluated on 1000 validation examples) with GPT-3.5 (evaluated on the same examples), as well as to GPT-4 and Goat-7B on the same task (as reported in \cite{liu2023goat}). We report both accuracy of the exact match of the final answer, as well as accuracy of individual digits in the final number. We note that the performance of our MLP matches the performance of the much larger fine-tuned transformer in \cite{liu2023goat}\footnote{We also trained a small 70M transformer using our tokenization and CoT scheme. This transformer achieved only $72\%$ per-digit accuracy, far worse than the MLP or the 7B transformer of \citet{liu2023goat}. That said, it is possible that a bigger transformer can achieve more competitive results, but needs far more compute compared to our MLP.}, and outperforms both GPT-3.5 and GPT-4 on this task. This demonstrates again that a lot of the power of language models can be attributed to the next-token auto-regressive training on high-quality data, and not necessarily to a particular architectural choice.

\section{Discussion}

The emerging capabilities of large language models has triggered an ongoing debate about their potential and implications. Certain proponents assert that we are close to achieving Artificial General Intelligence (AGI), pointing to models such as GPT-4 which have already demonstrated perceived ``sparks of AGI'' \citep{bubeck2023sparks}. They argue that AGI is just a matter of scaling up—creating larger models, feeding them with more data, and increasing training time. In stark contrast, others dismiss these large models as merely sophisticated autocomplete systems, voicing concerns about their propensity to potentially absorb and perpetuate biased and harmful data \cite{bender2021dangers}.

While this debate is far from settled, we hope that our work sheds light on the theoretical possibilities inherent in training auto-regressive next-token predictors. Our findings indicate that, given suitable data, simple next-token predictors can be trained to effectively learn virtually any function of interest. Consequently, if there exists some computer program capable of realizing AGI, then it is theoretically plausible to attain AGI through training simple next-token predictors, given the appropriate data. Admittedly, these assertions, in their current form, are somewhat theoretical, with practical application requiring data composed of potentially very long sequences of intermediate computations. However, we show that by modifying the choice of the hypothesis class we can possibly shorten the required sequence length, making our results more realistic. Therefore, we believe that our research can contribute towards a better, more nuanced understanding of both the capabilities and constraints associated with next-token predictors.

\subsection*{Impact Statement}

This paper presents work whose goal is to advance the field of Machine Learning. There are many potential societal consequences of our work, none which we feel must be specifically highlighted here.

\bibliography{bib}

\begin{thebibliography}{52}
\providecommand{\natexlab}[1]{#1}
\providecommand{\url}[1]{\texttt{#1}}
\expandafter\ifx\csname urlstyle\endcsname\relax
  \providecommand{\doi}[1]{doi: #1}\else
  \providecommand{\doi}{doi: \begingroup \urlstyle{rm}\Url}\fi

\bibitem[Abbe \& Sandon(2018)Abbe and Sandon]{abbe2018provable}
Abbe, E. and Sandon, C.
\newblock Provable limitations of deep learning.
\newblock \emph{arXiv preprint arXiv:1812.06369}, 2018.

\bibitem[Arora \& Barak(2009)Arora and Barak]{arora2009computational}
Arora, S. and Barak, B.
\newblock \emph{Computational complexity: a modern approach}.
\newblock Cambridge University Press, 2009.

\bibitem[Belanger \& Kakade(2015)Belanger and Kakade]{belanger2015linear}
Belanger, D. and Kakade, S.
\newblock A linear dynamical system model for text.
\newblock In \emph{International Conference on Machine Learning}, pp.\
  833--842. PMLR, 2015.

\bibitem[Bender et~al.(2021)Bender, Gebru, McMillan-Major, and
  Shmitchell]{bender2021dangers}
Bender, E.~M., Gebru, T., McMillan-Major, A., and Shmitchell, S.
\newblock On the dangers of stochastic parrots: Can language models be too big?
\newblock In \emph{Proceedings of the 2021 ACM conference on fairness,
  accountability, and transparency}, pp.\  610--623, 2021.

\bibitem[Blum et~al.(2003)Blum, Kalai, and Wasserman]{blum2003noise}
Blum, A., Kalai, A., and Wasserman, H.
\newblock Noise-tolerant learning, the parity problem, and the statistical
  query model.
\newblock \emph{Journal of the ACM (JACM)}, 50\penalty0 (4):\penalty0 506--519,
  2003.

\bibitem[Brown et~al.(2020)Brown, Mann, Ryder, Subbiah, Kaplan, Dhariwal,
  Neelakantan, Shyam, Sastry, Askell, et~al.]{brown2020language}
Brown, T., Mann, B., Ryder, N., Subbiah, M., Kaplan, J.~D., Dhariwal, P.,
  Neelakantan, A., Shyam, P., Sastry, G., Askell, A., et~al.
\newblock Language models are few-shot learners.
\newblock \emph{Advances in neural information processing systems},
  33:\penalty0 1877--1901, 2020.

\bibitem[Bubeck et~al.(2023)Bubeck, Chandrasekaran, Eldan, Gehrke, Horvitz,
  Kamar, Lee, Lee, Li, Lundberg, et~al.]{bubeck2023sparks}
Bubeck, S., Chandrasekaran, V., Eldan, R., Gehrke, J., Horvitz, E., Kamar, E.,
  Lee, P., Lee, Y.~T., Li, Y., Lundberg, S., et~al.
\newblock Sparks of artificial general intelligence: Early experiments with
  gpt-4.
\newblock \emph{arXiv preprint arXiv:2303.12712}, 2023.

\bibitem[Daniely \& Malach(2020)Daniely and Malach]{daniely2020learning}
Daniely, A. and Malach, E.
\newblock Learning parities with neural networks.
\newblock \emph{Advances in Neural Information Processing Systems},
  33:\penalty0 20356--20365, 2020.

\bibitem[Dauphin et~al.(2017)Dauphin, Fan, Auli, and
  Grangier]{dauphin2017language}
Dauphin, Y.~N., Fan, A., Auli, M., and Grangier, D.
\newblock Language modeling with gated convolutional networks.
\newblock In \emph{International conference on machine learning}, pp.\
  933--941. PMLR, 2017.

\bibitem[Edelman et~al.(2022)Edelman, Goel, Kakade, and
  Zhang]{edelman2022inductive}
Edelman, B.~L., Goel, S., Kakade, S., and Zhang, C.
\newblock Inductive biases and variable creation in self-attention mechanisms.
\newblock In \emph{International Conference on Machine Learning}, pp.\
  5793--5831. PMLR, 2022.

\bibitem[Eldan \& Li(2023)Eldan and Li]{eldan2023tinystories}
Eldan, R. and Li, Y.
\newblock Tinystories: How small can language models be and still speak
  coherent english?
\newblock \emph{arXiv preprint arXiv:2305.07759}, 2023.

\bibitem[Feng et~al.(2023)Feng, Gu, Zhang, Ye, He, and Wang]{feng2023towards}
Feng, G., Gu, Y., Zhang, B., Ye, H., He, D., and Wang, L.
\newblock Towards revealing the mystery behind chain of thought: a theoretical
  perspective.
\newblock \emph{arXiv preprint arXiv:2305.15408}, 2023.

\bibitem[Giannou et~al.(2023)Giannou, Rajput, Sohn, Lee, Lee, and
  Papailiopoulos]{giannou2023looped}
Giannou, A., Rajput, S., Sohn, J.-y., Lee, K., Lee, J.~D., and Papailiopoulos,
  D.
\newblock Looped transformers as programmable computers.
\newblock \emph{arXiv preprint arXiv:2301.13196}, 2023.

\bibitem[Hochreiter \& Schmidhuber(1997)Hochreiter and
  Schmidhuber]{hochreiter1997long}
Hochreiter, S. and Schmidhuber, J.
\newblock Long short-term memory.
\newblock \emph{Neural computation}, 9\penalty0 (8):\penalty0 1735--1780, 1997.

\bibitem[Katharopoulos et~al.(2020)Katharopoulos, Vyas, Pappas, and
  Fleuret]{katharopoulos2020transformers}
Katharopoulos, A., Vyas, A., Pappas, N., and Fleuret, F.
\newblock Transformers are rnns: Fast autoregressive transformers with linear
  attention.
\newblock In \emph{International conference on machine learning}, pp.\
  5156--5165. PMLR, 2020.

\bibitem[Kautz(1961)]{kautz1961realization}
Kautz, W.~H.
\newblock The realization of symmetric switching functions with linear-input
  logical elements.
\newblock \emph{IRE Transactions on Electronic Computers}, \penalty0
  (3):\penalty0 371--378, 1961.

\bibitem[Kearns(1998)]{kearns1998efficient}
Kearns, M.
\newblock Efficient noise-tolerant learning from statistical queries.
\newblock \emph{Journal of the ACM (JACM)}, 45\penalty0 (6):\penalty0
  983--1006, 1998.

\bibitem[Kojima et~al.(2022)Kojima, Gu, Reid, Matsuo, and
  Iwasawa]{kojima2022large}
Kojima, T., Gu, S.~S., Reid, M., Matsuo, Y., and Iwasawa, Y.
\newblock Large language models are zero-shot reasoners.
\newblock \emph{Advances in neural information processing systems},
  35:\penalty0 22199--22213, 2022.

\bibitem[LanguageTool(2024)]{languagetool}
LanguageTool.
\newblock Languagetool, 2024.
\newblock URL \url{https://languagetool.org/}.
\newblock Accessed: 2024-06-03.

\bibitem[Lee et~al.(2023)Lee, Sreenivasan, Lee, Lee, and
  Papailiopoulos]{lee2023teaching}
Lee, N., Sreenivasan, K., Lee, J.~D., Lee, K., and Papailiopoulos, D.
\newblock Teaching arithmetic to small transformers.
\newblock \emph{arXiv preprint arXiv:2307.03381}, 2023.

\bibitem[Li et~al.(2023)Li, Hessel, Yu, Ren, Chang, and Choi]{li2023symbolic}
Li, L.~H., Hessel, J., Yu, Y., Ren, X., Chang, K.-W., and Choi, Y.
\newblock Symbolic chain-of-thought distillation: Small models can also" think"
  step-by-step.
\newblock \emph{arXiv preprint arXiv:2306.14050}, 2023.

\bibitem[Li et~al.(2022)Li, Chen, Shen, Chen, Zhang, Li, Wang, Qian, Peng, Mao,
  et~al.]{li2022explanations}
Li, S., Chen, J., Shen, Y., Chen, Z., Zhang, X., Li, Z., Wang, H., Qian, J.,
  Peng, B., Mao, Y., et~al.
\newblock Explanations from large language models make small reasoners better.
\newblock \emph{arXiv preprint arXiv:2210.06726}, 2022.

\bibitem[Lightman et~al.(2023)Lightman, Kosaraju, Burda, Edwards, Baker, Lee,
  Leike, Schulman, Sutskever, and Cobbe]{lightman2023let}
Lightman, H., Kosaraju, V., Burda, Y., Edwards, H., Baker, B., Lee, T., Leike,
  J., Schulman, J., Sutskever, I., and Cobbe, K.
\newblock Let's verify step by step.
\newblock \emph{arXiv preprint arXiv:2305.20050}, 2023.

\bibitem[Liu et~al.(2022)Liu, Ash, Goel, Krishnamurthy, and
  Zhang]{liu2022transformers}
Liu, B., Ash, J.~T., Goel, S., Krishnamurthy, A., and Zhang, C.
\newblock Transformers learn shortcuts to automata.
\newblock \emph{arXiv preprint arXiv:2210.10749}, 2022.

\bibitem[Liu et~al.(2021)Liu, Dai, So, and Le]{liu2021pay}
Liu, H., Dai, Z., So, D., and Le, Q.~V.
\newblock Pay attention to mlps.
\newblock \emph{Advances in Neural Information Processing Systems},
  34:\penalty0 9204--9215, 2021.

\bibitem[Liu \& Low(2023)Liu and Low]{liu2023goat}
Liu, T. and Low, B. K.~H.
\newblock Goat: Fine-tuned llama outperforms gpt-4 on arithmetic tasks.
\newblock \emph{arXiv preprint arXiv:2305.14201}, 2023.

\bibitem[Lu et~al.(2022)Lu, Qiu, Yu, Welleck, and Chang]{lu2022survey}
Lu, P., Qiu, L., Yu, W., Welleck, S., and Chang, K.-W.
\newblock A survey of deep learning for mathematical reasoning.
\newblock \emph{arXiv preprint arXiv:2212.10535}, 2022.

\bibitem[Magister et~al.(2022)Magister, Mallinson, Adamek, Malmi, and
  Severyn]{magister2022teaching}
Magister, L.~C., Mallinson, J., Adamek, J., Malmi, E., and Severyn, A.
\newblock Teaching small language models to reason.
\newblock \emph{arXiv preprint arXiv:2212.08410}, 2022.

\bibitem[Malach \& Shalev-Shwartz(2022)Malach and
  Shalev-Shwartz]{malach2022hardness}
Malach, E. and Shalev-Shwartz, S.
\newblock When hardness of approximation meets hardness of learning.
\newblock \emph{The Journal of Machine Learning Research}, 23\penalty0
  (1):\penalty0 3942--3965, 2022.

\bibitem[Mikolov et~al.(2010)Mikolov, Karafi{\'a}t, Burget, Cernock{\`y}, and
  Khudanpur]{mikolov2010recurrent}
Mikolov, T., Karafi{\'a}t, M., Burget, L., Cernock{\`y}, J., and Khudanpur, S.
\newblock Recurrent neural network based language model.
\newblock In \emph{Interspeech}, volume~2, pp.\  1045--1048. Makuhari, 2010.

\bibitem[Minsky \& Papert(2017)Minsky and Papert]{minsky2017perceptrons}
Minsky, M. and Papert, S.~A.
\newblock \emph{Perceptrons, reissue of the 1988 expanded edition with a new
  foreword by L{\'e}on Bottou: an introduction to computational geometry}.
\newblock MIT press, 2017.

\bibitem[Muffo et~al.(2023)Muffo, Cocco, and Bertino]{muffo2023evaluating}
Muffo, M., Cocco, A., and Bertino, E.
\newblock Evaluating transformer language models on arithmetic operations using
  number decomposition.
\newblock \emph{arXiv preprint arXiv:2304.10977}, 2023.

\bibitem[Nogueira et~al.(2021)Nogueira, Jiang, and
  Lin]{nogueira2021investigating}
Nogueira, R., Jiang, Z., and Lin, J.
\newblock Investigating the limitations of transformers with simple arithmetic
  tasks.
\newblock \emph{arXiv preprint arXiv:2102.13019}, 2021.

\bibitem[Nye et~al.(2021)Nye, Andreassen, Gur-Ari, Michalewski, Austin, Bieber,
  Dohan, Lewkowycz, Bosma, Luan, et~al.]{nye2021show}
Nye, M., Andreassen, A.~J., Gur-Ari, G., Michalewski, H., Austin, J., Bieber,
  D., Dohan, D., Lewkowycz, A., Bosma, M., Luan, D., et~al.
\newblock Show your work: Scratchpads for intermediate computation with
  language models.
\newblock \emph{arXiv preprint arXiv:2112.00114}, 2021.

\bibitem[OpenAI(2023)]{OpenAI2023GPT4TR}
OpenAI.
\newblock Gpt-4 technical report.
\newblock \emph{ArXiv}, abs/2303.08774, 2023.

\bibitem[Peng et~al.(2023)Peng, Alcaide, Anthony, Albalak, Arcadinho, Cao,
  Cheng, Chung, Grella, GV, et~al.]{peng2023rwkv}
Peng, B., Alcaide, E., Anthony, Q., Albalak, A., Arcadinho, S., Cao, H., Cheng,
  X., Chung, M., Grella, M., GV, K.~K., et~al.
\newblock Rwkv: Reinventing rnns for the transformer era.
\newblock \emph{arXiv preprint arXiv:2305.13048}, 2023.

\bibitem[Qian et~al.(2022)Qian, Wang, Li, Li, and Yan]{qian2022limitations}
Qian, J., Wang, H., Li, Z., Li, S., and Yan, X.
\newblock Limitations of language models in arithmetic and symbolic induction.
\newblock \emph{arXiv preprint arXiv:2208.05051}, 2022.

\bibitem[Raz(2018)]{raz2018fast}
Raz, R.
\newblock Fast learning requires good memory: A time-space lower bound for
  parity learning.
\newblock \emph{Journal of the ACM (JACM)}, 66\penalty0 (1):\penalty0 1--18,
  2018.

\bibitem[Rosenblatt(1958)]{rosenblatt1958perceptron}
Rosenblatt, F.
\newblock The perceptron: a probabilistic model for information storage and
  organization in the brain.
\newblock \emph{Psychological review}, 65\penalty0 (6):\penalty0 386, 1958.

\bibitem[Roy \& Roth(2016)Roy and Roth]{roy2016solving}
Roy, S. and Roth, D.
\newblock Solving general arithmetic word problems.
\newblock \emph{arXiv preprint arXiv:1608.01413}, 2016.

\bibitem[Shalev-Shwartz \& Ben-David(2014)Shalev-Shwartz and
  Ben-David]{shalev2014understanding}
Shalev-Shwartz, S. and Ben-David, S.
\newblock \emph{Understanding machine learning: From theory to algorithms}.
\newblock Cambridge university press, 2014.

\bibitem[Shalev-Shwartz et~al.(2017)Shalev-Shwartz, Shamir, and
  Shammah]{shalev2017failures}
Shalev-Shwartz, S., Shamir, O., and Shammah, S.
\newblock Failures of gradient-based deep learning.
\newblock In \emph{International Conference on Machine Learning}, pp.\
  3067--3075. PMLR, 2017.

\bibitem[Siegelmann \& Sontag(1992)Siegelmann and
  Sontag]{siegelmann1992computational}
Siegelmann, H.~T. and Sontag, E.~D.
\newblock On the computational power of neural nets.
\newblock In \emph{Proceedings of the fifth annual workshop on Computational
  learning theory}, pp.\  440--449, 1992.

\bibitem[Thoppilan et~al.(2022)Thoppilan, De~Freitas, Hall, Shazeer,
  Kulshreshtha, Cheng, Jin, Bos, Baker, Du, et~al.]{thoppilan2022lamda}
Thoppilan, R., De~Freitas, D., Hall, J., Shazeer, N., Kulshreshtha, A., Cheng,
  H.-T., Jin, A., Bos, T., Baker, L., Du, Y., et~al.
\newblock Lamda: Language models for dialog applications.
\newblock \emph{arXiv preprint arXiv:2201.08239}, 2022.

\bibitem[Tolstikhin et~al.(2021)Tolstikhin, Houlsby, Kolesnikov, Beyer, Zhai,
  Unterthiner, Yung, Steiner, Keysers, Uszkoreit, et~al.]{tolstikhin2021mlp}
Tolstikhin, I.~O., Houlsby, N., Kolesnikov, A., Beyer, L., Zhai, X.,
  Unterthiner, T., Yung, J., Steiner, A., Keysers, D., Uszkoreit, J., et~al.
\newblock Mlp-mixer: An all-mlp architecture for vision.
\newblock \emph{Advances in neural information processing systems},
  34:\penalty0 24261--24272, 2021.

\bibitem[Valiant(1984)]{valiant1984theory}
Valiant, L.~G.
\newblock A theory of the learnable.
\newblock \emph{Communications of the ACM}, 27\penalty0 (11):\penalty0
  1134--1142, 1984.

\bibitem[Vaswani et~al.(2017)Vaswani, Shazeer, Parmar, Uszkoreit, Jones, Gomez,
  Kaiser, and Polosukhin]{vaswani2017attention}
Vaswani, A., Shazeer, N., Parmar, N., Uszkoreit, J., Jones, L., Gomez, A.~N.,
  Kaiser, {\L}., and Polosukhin, I.
\newblock Attention is all you need.
\newblock \emph{Advances in neural information processing systems}, 30, 2017.

\bibitem[Wei et~al.(2022{\natexlab{a}})Wei, Chen, and Ma]{wei2022statistically}
Wei, C., Chen, Y., and Ma, T.
\newblock Statistically meaningful approximation: a case study on approximating
  turing machines with transformers.
\newblock \emph{Advances in Neural Information Processing Systems},
  35:\penalty0 12071--12083, 2022{\natexlab{a}}.

\bibitem[Wei et~al.(2022{\natexlab{b}})Wei, Wang, Schuurmans, Bosma, Xia, Chi,
  Le, Zhou, et~al.]{wei2022chain}
Wei, J., Wang, X., Schuurmans, D., Bosma, M., Xia, F., Chi, E., Le, Q.~V.,
  Zhou, D., et~al.
\newblock Chain-of-thought prompting elicits reasoning in large language
  models.
\newblock \emph{Advances in Neural Information Processing Systems},
  35:\penalty0 24824--24837, 2022{\natexlab{b}}.

\bibitem[Wies et~al.(2022)Wies, Levine, and Shashua]{wies2022sub}
Wies, N., Levine, Y., and Shashua, A.
\newblock Sub-task decomposition enables learning in sequence to sequence
  tasks.
\newblock \emph{arXiv preprint arXiv:2204.02892}, 2022.

\bibitem[Yun et~al.(2019)Yun, Bhojanapalli, Rawat, Reddi, and
  Kumar]{yun2019transformers}
Yun, C., Bhojanapalli, S., Rawat, A.~S., Reddi, S.~J., and Kumar, S.
\newblock Are transformers universal approximators of sequence-to-sequence
  functions?
\newblock \emph{arXiv preprint arXiv:1912.10077}, 2019.

\bibitem[Zhai et~al.(2021)Zhai, Talbott, Srivastava, Huang, Goh, Zhang, and
  Susskind]{zhai2021attention}
Zhai, S., Talbott, W., Srivastava, N., Huang, C., Goh, H., Zhang, R., and
  Susskind, J.
\newblock An attention free transformer.
\newblock \emph{arXiv preprint arXiv:2105.14103}, 2021.

\end{thebibliography}
\bibliographystyle{icml2024}

\newpage
\onecolumn
\appendix

\section{Proofs}
\label{app:proofs}

\begin{proof}[Proof of Theorem \ref{thm:ar_learnability}]
    Let $\cd$ be some distribution over $\cx \times \cz_T$ realizable by $\ch$, and let $\cd_t$ be the distribution over $(\cx \times \cz_{t-1}) \times \bbD$, where we sample $(\x, \z) \sim \cd$ and observe $((\x,\z_{<t}),z_t)$. Therefore, $\cd_t$ is a labeled distribution realizable by $\ch_t$, and so we can use a learner for $\ch_t$ to find using $m(\epsilon/T,\delta/T)$ samples, with probability $1-\delta/T$, a hypothesis $\hat{h}_t$ s.t. $\Pr_\cd \left[\hat{h}_t(\x,\z_{<t}) \ne z_t\right] \le \epsilon/T$. Therefore, using the union bound, with probability at least $1-\delta$, we get:
    \[
        \Pr\left[\exists t \le T ~\mathrm{s.t.}~\hat{h}_t(\x, \z_{<t})\ne z_{t}\right] \le \sum_{t \le T} \Pr\left[\hat{h}_t(\x, \z_{<t})\ne z_{t}\right] \le \epsilon 
    \]  
\end{proof}

\begin{proof}[Proof of Theorem \ref{thm:app_learning}]

    By the definition of AR Learnability, we can find a hypothesis $\hat{h}$ s.t., with probability at least $1-\epsilon$ over $\x \sim \cd$, we get $$\hat{h}(\x,h^{(1)}(\x), \dots, h^{(t)}(\x)) = h(\x, h^{(1)}(\x), \dots, h^{(t)}(\x))$$ for all $t$. So, for such $\x$ we get
    \[
        \hat{h}^{(1)}(\x) = \hat{h}(\x, \emptyset) = h(\x, \emptyset) = h^{(1)}(\x)
    \]
    and by induction:
    \begin{align*}
        \hat{h}^{(t)}(\x) &= \hat{h}(\x,\hat{h}^{(1)}(\x), \dots, \hat{h}^{(t-1)}(\x)) \\
        &= \hat{h}(\x,h^{(1)}(\x), \dots, h^{(t-1)}(\x)) \\
        &= h(\x,h^{(1)}(\x), \dots, h^{(t-1)}(\x)) = h^{(t)}(\x)
    \end{align*}
\end{proof}

\begin{proof}[Proof of Theorem \ref{thm:boolean_circuit}]
    Let $f$ be some target circuit, and we define the depth of some gate in the circuit to be the maximal number of nodes in a path connecting the gate to some input variable. We sort the gates in the circuit by their depth, and let $f^{(1)}, \dots, f^{(T)}$ be the functions computed by the gates in the circuit (where $f^{(T)} = f$ is the output function). Observe that every gate $f^{(t)}$ can be computed by the argmax of a linear function of the inputs and previous gates, and therefore we can define some linear hypothesis $h$ s.t. $h(\x, f^{(1)}(\x), \dots, f^{(t-1)}) = f^{(t)}(\x)$. By induction, we get that for every $t$ we have $h^{(t)} = f^{(t)}$ and therefore the required follows.
\end{proof}

\begin{proof}[Proof of Theorem \ref{thm:parity_lc2}]
    To show that the length complexity is $O(n/k)$, observe that it is enough to construct a Boolean circuit of size $O(n/k)$, where every gate computes a parity over at most $k$ input bits (similarly to the proof of Theorem \ref{thm:boolean_circuit}). This circuit has the structure of a tree, where each node has in-degree at most $k$. It is easy to see that such a tree, with depth $\log_k(n)$ and $O(n/k)$ internal nodes can compute the parity over any subset of bits from the input.

    We now show that the length complexity is lower bounded by $\Omega(n/k)$. Assume, for the sake of contradiction, that $\cp_n$ can be computed with length complexity $T \le n/2k$, and particularly this implies that the parity over all input bits (namely, $\chi_{[n]}$) can be computed with $T \le n/2k$. Observe that, by the choice of the function class, at every step $t$ we have $h^{(t)}(\x) = \chi_{A_t}(\x)$ for some subset $A_t \subseteq [n]$. Additionally, at every step $t$, the size of $A_t$ can increase by at most $k$. Therefore, after $T \le n/2k$ steps, $h^{(T)}(\x) = \chi_{A_T}$ for some $A_T \subsetneq [n]$, and therefore $h^{(T)}(\x)$ does not compute (or even approximate) $\chi_{[n]}$.
\end{proof}

\newpage
\section{Additional Figures}
\label{app:figs}

\begin{figure}[h]
    \centering
    \begin{mdframed}[style=textbox]
    \begin{tabularx}{\textwidth}{lp{6cm}}
    \textbf{Prompt:} & 1394$\times$8618= \\
    \hline
    \textbf{Output (MLP):} & (4$\times$1+9$\times$10+3$\times$100+1$\times$1000)$\times$ \newline (8$\times$1+1$\times$10+6$\times$100+8$\times$1000)=\newline
    4$\times$8$\times$1$\times$1+4$\times$1$\times$1$\times$10+4$\times$6$\times$1$\times$100+\newline
    4$\times$8$\times$1$\times$1000+9$\times$8$\times$10$\times$1+9$\times$1$\times$10$\times$10+\newline
    9$\times$6$\times$10$\times$100+9$\times$8$\times$10$\times$1000+3$\times$8$\times$100$\times$1+\newline
    3$\times$1$\times$100$\times$10+3$\times$6$\times$100$\times$100+\newline
    3$\times$8$\times$100$\times$1000+1$\times$8$\times$1000$\times$1+\newline
    1$\times$1$\times$1000$\times$10+1$\times$6$\times$1000$\times$100+\newline
    1$\times$8$\times$1000$\times$1000=\newline
    32+040+2400+32000+720+0900+54000+\newline
    720000+2400+03000+180000+2400000+\newline
    08000+010000+0600000+08000000=\newline
    \textbf{\gbg{12013492}} \\
    \hline
    \textbf{Output (GPT-3.5):} & The multiplication of 1394 by 8618 is equal to \textbf{\gbg{12},\gbg{013},\rbg{6}\gbg{92}}.\\
    \textbf{Output (GPT-4):} & The multiplication of 1394 and 8618 equals \textbf{\gbg{12},\gbg{01}\rbg{4},\rbg{05}\gbg{2}}. \\
    \hline
    \textbf{Correct Answer:} & \textbf{12013492}
    \end{tabularx}
    \end{mdframed}
    \caption{Comparison between the output of our MLP, GPT-3.5 and GPT-4 on the 4-digit multiplication task.}

\end{figure}

\section{TinyStories GPT-4 Evaluation}
\label{app:grading}

To evaluate the models trained on the TinyStories dataset, we give all models a beginning of a sentence from a list of 50 sentences (generated by GPT-4). We then let the models complete the sentence, generating up to 64 tokens. We sample all models with temperature 0.5. We use the following prompt to GPT-4 to grade the quality of the produced text (this prompt is taken, with small modifications, from \cite{eldan2023tinystories}):

\textit{\textbf{The following exercise, the student is given a beginning of a sentence from a story. The student needs to complete it into a paragraph from the story.
The exercise tests the student´s language abilities and creativity. The symbol *** marks the separator between the
prescribed beginning and the student’s completion:}}

\textit{Once upon a time, in an ancient house,*** there lived a little girl named Mia. Mia loved to play with her toys and have fun with her friends. One day, Mia found a big box in her room. It was very pretty and shiny.}

\textit{\textbf{Please provide your general assessment about the part written by the student (the one after the *** symbol).
Is it gramatically correct? Is it consistent with the beginning of the story? Pay special attention to whether the
student manages to complete the sentence which is split in the middle by the separator ***.
The student’s completion of the story is mostly consistent with the beginning of the story. It maintains the focus on Lily
and her family, and the sentence split by the separator is completed correctly. However, the student’s addition does not fully
integrate the shiny decorations found in the attic, which were a significant part of the beginning.
The grammar is generally correct, but there are a few minor errors: ⟨list omitted⟩.
Overall, the student’s completion of the story demonstrates adequate language abilities and creativity, but could benefit from
better integration of the shiny decorations and minor grammar improvements.
Now, grade the student’s completion in terms of grammar, creativity, consistency with the story’s beginning and
whether the plot makes sense. Use numbers from 1 to 10.}}

\textit{\textbf{Output example:
Grammar: 8, Creativity: 7, Consistency:5, Plot: 6}}

We report the average score for all models.
Below is the list of sentences that we use:

\small
\begin{itemize}
\item\textit{``Once upon a time, in a colorful garden, there lived a tiny caterpillar named Charlie who...''}
\item\textit{``In the big, blue sky, Lucy the little bird was learning how to...''}
\item\textit{``Under the warm, shining sun, Benny the playful puppy found a...''}
\item\textit{``In the quiet, cozy barn, Millie the cow was dreaming about...''}
\item\textit{``On a bright, sunny morning, Oliver the curious kitten saw a...''}
\item\textit{``Deep in the green forest, Harry the little hedgehog was searching for...''}
\item\textit{``Near the sparkling river, Daisy the duck was making friends with...''}
\item\textit{``In the tall, whispering grass, Freddy the frog was hopping towards...''}
\item\textit{``Up in the fluffy white clouds, Peter the plane was flying over...''}
\item\textit{``Beneath the twinkling stars, Luna the owl was watching...''}
\item\textit{``Along the sandy beach, Sammy the crab was building a...''}
\item\textit{``In the busy, buzzing meadow, Bella the bee was collecting nectar from...''}
\item\textit{``On top of the snowy mountain, Eddie the eagle was soaring above...''}
\item\textit{``Inside the colorful coral reef, Wendy the fish was swimming with...''}
\item\textit{``At the edge of the mysterious jungle, Zoe the zebra was looking at...''}
\item\textit{``In the middle of the big city, Max the mouse was exploring...''}
\item\textit{``Under the bright rainbow, Ruby the rabbit was playing with...''}
\item\textit{``On the quiet farm, Ollie the ox was helping to...''}
\item\textit{``In the vast, open field, Ellie the elephant was trumpeting to...''}
\item\textit{``Next to the cool pond, Darcy the dragonfly was zooming around...''}
\item\textit{``In the old, wise tree, Timmy the squirrel was collecting nuts for...''}
\item\textit{``Around the busy bee hive, Polly the butterfly was fluttering near...''}
\item\textit{``Underneath the cozy blanket, Lily the lamb was dreaming of...''}
\item\textit{``Beside the gentle stream, Finley the fish was hiding from...''}
\item\textit{``In the dark, spooky cave, George the bat was hanging upside down and...''}
\item\textit{``Atop the ancient castle, Fiona the falcon was guarding...''}
\item\textit{``Within the enchanted forest, Greta the gnome was casting spells to...''}
\item\textit{``Behind the colorful rainbow, Nora the nymph was playing tricks on...''}
\item\textit{``Among the tall sunflowers, Sunny the sunbird was singing to...''}
\item\textit{``On the quiet moonlit night, Marvin the moth was flying towards...''}
\item\textit{``In the golden wheat field, Will the weasel was sneaking through...''}
\item\textit{``Along the sparkling coastline, Coral the seagull was searching for...''}
\item\textit{``Under the large oak tree, Oakley the owl was preparing for...''}
\item\textit{``In the middle of the pumpkin patch, Patty the pumpkin was waiting to...''}
\item\textit{``At the bottom of the deep ocean, Oscar the octopus was discovering...''}
\item\textit{``On the windy hilltop, Hannah the hawk was watching for...''}
\item\textit{``Inside the bustling anthill, Andy the ant was working hard to...''}
\item\textit{``Near the rosy apple tree, Amy the aardvark was sniffing around...''}
\item\textit{``At the edge of the shimmering lake, Leah the loon was diving for...''}
\item\textit{``In the shade of the big palm tree, Parker the parrot was chatting with...''}
\item\textit{``Beneath the golden sun, Gary the grasshopper was leaping towards...''}
\item\textit{``Along the bubbling brook, Brooke the beaver was building a...''}
\item\textit{``Under the cool, leafy canopy, Carl the caterpillar was munching on...''}
\item\textit{``In the soft, fluffy clouds, Claire the cloud was changing shapes to...''}
\item\textit{``On the bright, colorful rainbow, Roy the robin was hopping along...''}
\item\textit{``At the foot of the tall mountain, Monty the mountain goat was climbing up...''}
\item\textit{``Inside the warm, sunny greenhouse, Grace the gardener was planting...''}
\item\textit{``Near the quiet, sleepy village, Victor the vulture was soaring high above...''}
\item\textit{``Around the bustling city park, Paula the pigeon was pecking at...''}
\item\textit{``Under the starry night sky, Stella the starfish was dreaming about..."}
\end{itemize}

\end{document}